\documentclass[11pt]{article}

\usepackage[a4paper,margin=1in,headheight=14pt]{geometry}

\usepackage{amsmath}
\usepackage{amsthm}
\usepackage{amssymb,amsfonts}

\usepackage[T1]{fontenc}
\usepackage[utf8]{inputenc}
\usepackage{newpxtext}
\usepackage[amssymbols]{newpxmath}
\usepackage[scaled=0.92]{helvet}   
\usepackage{microtype}
\usepackage{mathtools}
\usepackage{bm}

\usepackage{graphicx}
\usepackage{multirow}
\usepackage{booktabs}
\usepackage{array}
\usepackage{tabularx}
\usepackage{subcaption}
\usepackage{enumitem}
\usepackage{algorithm}
\usepackage{algorithmicx}
\usepackage{algpseudocode}

\usepackage{xcolor}
\definecolor{outspan}{HTML}{E8744F}
\definecolor{inspan}{HTML}{159A8C}
\definecolor{accent}{HTML}{1F6F78}   
\definecolor{linkcol}{HTML}{1F6F78}
\definecolor{citecol}{HTML}{9B4F2E}

\usepackage{tcolorbox}
\tcbuselibrary{skins,breakable}

\usepackage{titlesec}
\titleformat{\section}
  {\normalfont\sffamily\large\bfseries\color{accent}}
  {\thesection}{0.75em}{}
\titleformat{\subsection}
  {\normalfont\sffamily\normalsize\bfseries\color{accent!85!black}}
  {\thesubsection}{0.6em}{}
\titlespacing*{\section}{0pt}{1.6ex plus 0.6ex minus .2ex}{0.9ex plus .2ex}
\titlespacing*{\subsection}{0pt}{1.3ex plus 0.5ex minus .2ex}{0.6ex plus .2ex}

\usepackage[font=small,labelfont={bf,sf},labelsep=period,
            justification=justified,skip=6pt]{caption}
\usepackage{fancyhdr}
\fancypagestyle{titlefooter}{%
  \fancyhf{}%
  \fancyfoot[L]{\footnotesize\sffamily\color{gray}Hedayat et al.}%
  \fancyfoot[C]{\thepage}%
}

\usepackage[colorlinks=true,
            linkcolor=linkcol,
            citecolor=citecol,
            urlcolor=linkcol,
            breaklinks=true]{hyperref}
\usepackage[capitalise,nameinlink]{cleveref}
\crefname{figure}{Fig.}{Figs.}
\Crefname{figure}{Fig.}{Figs.}
\crefname{table}{Table}{Tables}
\Crefname{table}{Table}{Tables}
\crefname{equation}{Eq.}{Eqs.}
\Crefname{equation}{Eq.}{Eqs.}
\crefname{section}{Section}{Sections}
\Crefname{section}{Section}{Sections}
\crefname{appendix}{Appendix}{Appendices}
\Crefname{appendix}{Appendix}{Appendices}
\crefname{algorithm}{Algorithm}{Algorithms}
\Crefname{algorithm}{Algorithm}{Algorithms}
\crefname{theorem}{Theorem}{Theorems}
\Crefname{theorem}{Theorem}{Theorems}
\crefname{definition}{Definition}{Definitions}
\Crefname{definition}{Definition}{Definitions}

\theoremstyle{definition}
\newtheorem{statement}{Statement}
\newtheorem{theorem}[statement]{Theorem}

\newtheorem{definition}[statement]{Definition}
\theoremstyle{remark}

\graphicspath{{./}}

\allowdisplaybreaks
\begin{document}
\thispagestyle{titlefooter}

\begin{center}
  {\color{accent}\rule{\linewidth}{1.2pt}}\\[1.1em]
  {\sffamily\bfseries\LARGE
   In-span learning: adapting reduced-order models\\[0.25em]
   using their own predictions\par}
  \vspace{0.9em}
  {\color{accent}\rule{\linewidth}{0.6pt}}\\[1.4em]

  \renewcommand{\thefootnote}{\fnsymbol{footnote}}
  {\large
   Amirpasha Hedayat\textsuperscript{1,3,}\footnote{Corresponding author:
   \href{mailto:ahedayat@umich.edu}{ahedayat@umich.edu}}\quad
   Laura Balzano\textsuperscript{2,3}\quad
   Karthik Duraisamy\textsuperscript{1,3}\par}
  \renewcommand{\thefootnote}{\arabic{footnote}}\setcounter{footnote}{0}
  \vspace{0.9em}

  {\small\itshape
   \textsuperscript{1}Department of Aerospace Engineering\\
   \textsuperscript{2}Department of Electrical Engineering and Computer Science\\
   \textsuperscript{3}Michigan Institute for Computational Discovery and Engineering,
   University of Michigan, Ann Arbor, MI, USA\par}
  \vspace{1.6em}
\end{center}

\begingroup
\setlength{\leftskip}{0.06\linewidth}
\setlength{\rightskip}{0.06\linewidth}
\noindent{\sffamily\bfseries\color{accent}Abstract}\\[0.4em]
\noindent
Reduced-order models compress high-dimensional dynamics into low-dimensional representations that can be evaluated rapidly, but they lose accuracy when online dynamics drift beyond the training data. Adaptive methods address this by updating the subspace online with external, \emph{out-of-span} information, such as full-order corrections or sensor snapshots. We discovered that a complementary and previously unexploited \emph{in-span} adaptation channel exists within the current reduced subspace. By streaming the model's own predictions through an incremental singular-value decomposition with forgetting, we obtain a \emph{trajectory-informed spectral preconditioner}, in which the subspace is unchanged but the basis is reweighted and realigned toward the modes visited by the dynamics. This enables the model to absorb future out-of-span corrections more effectively. We expose aspects of this mechanism on a three-dimensional spiral and confirm it on viscous Burgers and Fisher--KPP dynamics. We also discuss how in-span learning can be viewed as a dynamical-systems analogue of in-context learning. More broadly, in-span learning suggests a new principle for computational science, revealing that model-generated trajectories contain more usable information than previously recognized.
\par\medskip
\noindent{\small\textbf{\sffamily Keywords:}\ In-span learning, Spectral preconditioning, Solver acceleration, Adaptive model reduction, Incremental SVD, Subspace tracking.}
\endgroup

\vspace{1.0em}
\noindent{\color{accent}\rule{\linewidth}{0.6pt}}
\vspace{0.4em}

\begingroup
\footnotesize\color{gray}\itshape
\noindent This is a preprint.
\endgroup

\vspace{1.2em}

\section{Introduction}

Across science and engineering, applications such as aircraft design~\cite{lyu2015aerodynamic}, weather forecasting~\cite{lam2023learning,kochkov2024neural,bodnar2025foundation}, combustion predictions~\cite{swischuk2020learning}, power-grid control~\cite{khaloie2025review}, uncertainty quantification~\cite{xiu2003modeling,degen2022multiphysics}, and digital twins~\cite{niederer2021scaling,kapteyn2021probabilistic} all rely on numerical models that  require expensive computations. There is also a need to  run these computations repeatedly across parameters, controls, uncertain inputs, or real-time decisions. This setting calls for simulators that are accurate enough to trust and inexpensive enough to use freely.

Reduced-order models (ROMs) address this need by compressing high-dimensional simulations into low-dimensional representations~\cite{antoulas2005approximation,rowley2017model,antoulas2020interpolatory,kramer2024learning}. Although a discretized physical model may contain millions of degrees of freedom, its relevant trajectories often occupy a much smaller region of state space. Projection-based ROMs exploit this structure by constructing a low-dimensional subspace, commonly from proper orthogonal decomposition (POD) of full-order snapshots~\cite{lumley1967structure,sirovich1987turbulence,holmes1996turbulence}, and evolving only the coordinates of the solution in that subspace~\cite{benner2015survey}. When the online dynamics remain close to the training data, this compression can deliver orders-of-magnitude speed-ups~\cite{huang2022component}.
The same compression also creates the central failure mode of ROMs. A static ROM is trained offline and deployed with a fixed basis. It can interpolate accurately within the regime represented by its snapshots, but it loses predictive power when the online trajectory drifts beyond that regime. This is a form of distribution shift that exists whether the reduced representation is a linear subspace, a nonlinear manifold~\cite{geelen2023operator}, or a learned surrogate~\cite{lee2020model}. For projection-based ROMs considered in this work, this limitation is especially severe for systems with slowly decaying Kolmogorov $N$-width, where no fixed low-dimensional linear subspace can represent the evolving solution uniformly well over long times~\cite{peherstorfer2022breaking,barnett2022quadratic}. Transport-dominated flows, shocks, moving fronts, combustion, and detonation dynamics are typical examples~\cite{reiss2018shifted,nair2019transported,rim2023manifold,mirhoseini2023model,arnoldmedabalimi2022large}. They are also among the settings where fast predictive simulation would be most valuable.

Adaptive ROMs attempt to overcome this rigidity by allowing the reduced representation to evolve during the online simulation. Early efforts included interpolation between locally trained bases~\cite{amsallem2008interpolation,amsallem2011online,amsallem2012nonlinear,peherstorfer2014localized}. More recently, incremental basis approaches have gained attention, where the basis is updated online with newly acquired information~\cite{peherstorfer2015dynamic,peherstorfer2015online,peherstorfer2020model,huang2023predictive,singh2023lookahead,mohaghegh2026feature,hedayat2026adaptive_nonintrusive}. A related line of work draws on subspace-tracking ideas~\cite{balzano2018streaming,bunch1978updating,brand2002incremental,brand2006fast,oja1982simplified,balzano2010online,yang1995projection,chi2013petrels}, which have motivated geometric basis updates~\cite{zimmermann2018geometric} and history-aware adaptation~\cite{hedayat2026history}.
These incremental basis-updating approaches differ in how they obtain new information and how they update the basis, but they share a common premise. The basis is adapted using information with a nonzero component outside the current reduced subspace. This external information may come from a full-order state, a sensor snapshot, or another correction oracle. We call such data \emph{out-of-span}. The premise is natural because only out-of-span information can move the subspace. A snapshot that lies exactly inside the current subspace has zero residual and therefore appears, from the usual subspace-update viewpoint, to add no value.

In this paper, we show that this reasoning misses a complementary adaptation channel. Between external corrections, a ROM continuously produces predictions of its own state. By construction, these predictions lie inside the current reduced subspace, and we therefore call them \emph{in-span}. They cannot expand the subspace, but they can still reorganize the representation inside it. We show that when the ROM's own predictions are streamed through an incremental singular-value decomposition (iSVD)~\cite{bunch1978updating,brand2002incremental,brand2006fast} with forgetting, they rotate the orthonormal basis within the same subspace and reweight its singular spectrum. The result is a \emph{trajectory-informed spectral preconditioner}. It does not change what the ROM can represent at that instant, but it changes how the basis is prepared to absorb the next external correction. We refer to the resulting adaptive ROM as \emph{SPIN}, for \emph{\textbf{S}pectral \textbf{P}reconditioning via \textbf{IN}-span learning} (\cref{fig:concept}). Throughout the paper, we use \emph{baseline adaptive ROM} for the iSVD-based adaptive model that only uses external out-of-span corrections and performs no in-span updates between correction events. It is notable that the baseline adaptive ROM is itself representative of the state-of-the-art~\cite{hedayat2026history}.

\begin{figure}[!t]
\centering
\includegraphics[width=\linewidth]{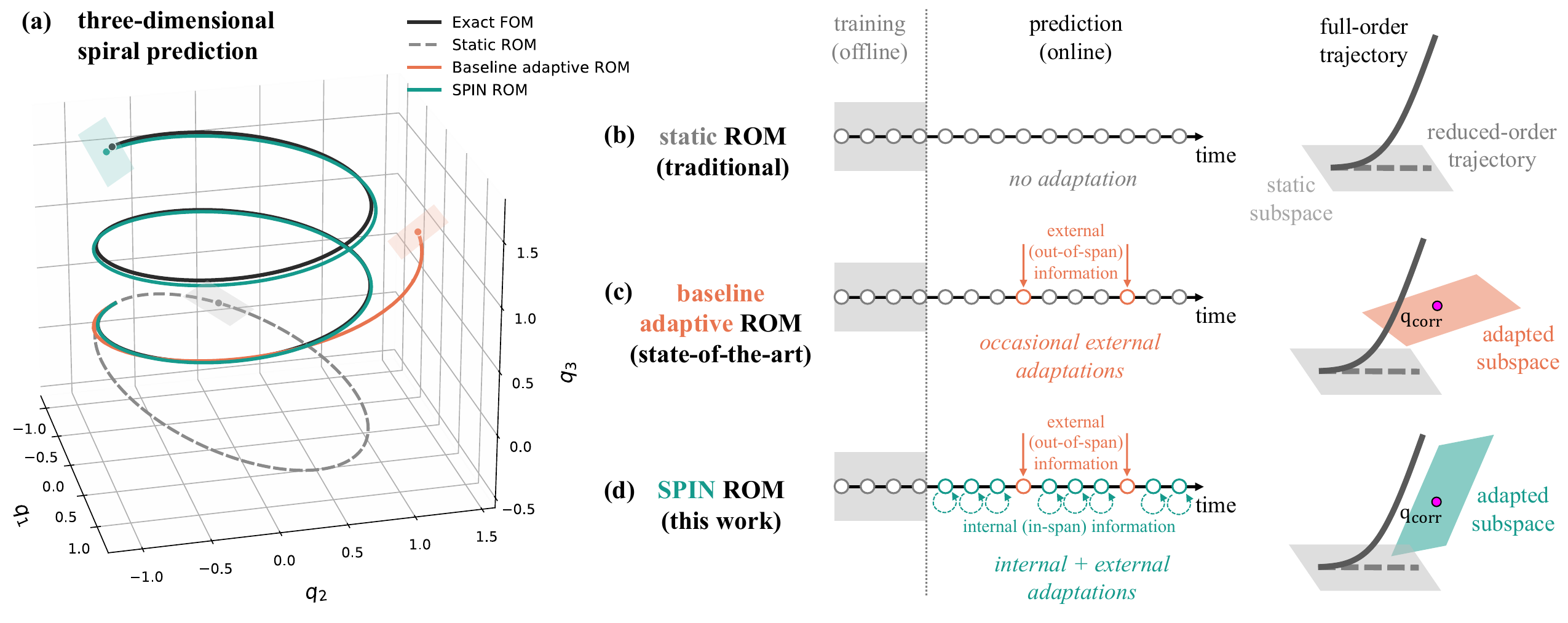}
\caption{\textbf{In-span learning adapts a reduced-order model from its own predictions.} Three reduced-order modeling strategies are shown through their offline/online timelines and the corresponding evolution of the reduced representation. \textbf{(a)} Three-dimensional spiral prediction comparing the exact full-order model (FOM) trajectory, static ROM, baseline adaptive ROM, and SPIN ROM. \textbf{(b)} A static ROM is trained offline and then deployed online with no adaptation. Its reduced trajectory remains confined to the original static subspace and can lose track of the full-order trajectory as the dynamics leave the training regime. \textbf{(c)} A baseline adaptive ROM queries external out-of-span information during the online phase, such as a correction snapshot $q_{\mathrm{corr}}$, and uses it to move the subspace toward the full-order dynamics. \textbf{(d)} The proposed SPIN ROM uses the same external correction events, but also streams its own intermediate predictions as in-span information. These in-span updates do not move the subspace by themselves, but they reweight and reorient the basis before the next correction arrives, preparing the reduced representation to absorb out-of-span information more effectively.}
\label{fig:concept}
\end{figure}

We develop in-span learning first at the level of the update mechanism, then in a closed-form spiral example, and finally in nonlinear PDEs where the same mechanism improves long time-horizon prediction. Across these examples, the same conclusion emerges. Reduced models can learn not only from new data outside their span, but also from the trajectory they have already produced inside it.

\section{Results}

\subsection{A hidden adaptation channel inside the current subspace}

We start with a brief overview of the in-span learning mechanism, showing how a model's own predictions can contribute to its adaptation. A summary of the notation used throughout the paper is provided in \cref{app:notation}. Consider a ROM with orthonormal basis $\Phi\in\mathbb{R}^{N\times r}$ and singular values collected in $\Sigma\coloneqq\mathrm{diag}(\sigma_1,\ldots,\sigma_r)$. At any time step, the ROM prediction has the form
$
    q_{\mathrm{ROM}}\coloneqq\Phi a,
$
where $a\in\mathbb{R}^r$ is the reduced coordinate vector. This prediction lies exactly in the current reduced subspace, so an iSVD update sees zero orthogonal residual and cannot add a new direction. The point of in-span learning is that this zero-residual snapshot can still change the basis state. Let $C\coloneqq\Sigma^2$ denote the current covariance represented in reduced coordinates. With forgetting factor $\gamma_{\rm in}\in(0,1]$, the in-span iSVD update reduces to a rank-one update of this reduced covariance,
\begin{equation}
    C^+ = \gamma_{\rm in}^2 C + aa^\top .
    \label{eq:inspan_covariance_update_results}
\end{equation}
Diagonalizing this updated small $r\times r$ matrix,
$C^+=U\Lambda U^\top$, gives the new squared singular values through $\Lambda$ and an orthogonal rotation $U\in\mathbb{R}^{r\times r}$ of the basis inside the same subspace. Equivalently,
\begin{equation}
    \Sigma^+ = \Lambda^{1/2},
    \qquad
    \Phi^+ = \Phi U,
    \qquad
    \mathrm{span}(\Phi^+) = \mathrm{span}(\Phi).
    \label{eq:inspan_subspace_preservation_results}
\end{equation}
Thus, an in-span update preserves the geometric object that determines what states the ROM can represent, but changes the spectral object that determines the modal weights. The formal iSVD update, the derivation of Eq.~\eqref{eq:inspan_covariance_update_results}, and the proof of Eq.~\eqref{eq:inspan_subspace_preservation_results} are given in \cref{sec:methods_isvd} and~\ref{sec:methods_isl}.

Equation~\eqref{eq:inspan_covariance_update_results} gives the mechanism a simple interpretation. The vector $a$ records which directions of the current basis the ROM trajectory is using. The update discounts the previous spectral state by $\gamma_{\rm in}^2$ and adds the rank-one contribution $aa^\top$ from the current ROM prediction. A direction that is strongly represented in the recent trajectory can be reinforced, whereas a direction that receives too little trajectory energy can decay. The outcome is trajectory-dependent and controlled by the forgetting factor, which sets the memory horizon of in-span learning. With $\gamma_{\rm in}$ close to one, the ROM accumulates many recent predictions and reorganizes the basis around sustained dynamics. With smaller $\gamma_{\rm in}$, it emphasizes the most recent trajectory and can rapidly suppress inactive directions. The reinforcement--suppression threshold is derived in \cref{sec:methods_isl}.

This in-span update matters because the adaptive ROMs studied here operate at fixed rank. When the next external correction arrives, the new out-of-span information must be merged with the existing basis and then truncated back to rank $r$. The outcome of that truncation depends not only on the incoming correction, but also on the spectral state and orientation of the basis that receives it. In-span learning therefore acts as a trajectory-informed spectral preconditioner for the next out-of-span update. This mechanism uses no additional full-order information. Between external corrections, the ROM already produces the states used in the in-span update. The only extra operation is a small SVD in the reduced space, followed by a rotation and reweighting of the existing basis and the usual refresh of basis-dependent quantities in the hyper-reduced model when needed. The cost and implementation details are described in \cref{sec:methods_complexity} and~\ref{sec:methods_alg}.

\subsection{A toy example to expose a key mechanism}

To illustrate in-span learning in a simple setting, we begin with a three-dimensional spiral whose full-order dynamics are known exactly as
$
    q(t)\coloneqq
    \begin{bmatrix}
    \cos t \quad \sin t \quad \alpha t
    \end{bmatrix}^T
$ with
$
    \alpha=0.4.
$
This trajectory solves an affine linear system in $\mathbb{R}^3$, which we advance with its exact discrete flow. The full setup is given in \cref{sec:methods_fom}.

The initial ROM is deliberately minimal. We form a rank-$2$ POD basis $\Phi_0$ from only two full-order snapshots, $q^1$ and $q^2$, and project the exact discrete dynamics onto the resulting plane $\mathcal{S}_0:=\mathrm{span}(\Phi_0)=\mathrm{span}(q^1,q^2)$. The static ROM trajectory remains confined to $\mathcal{S}_0$ and drifts as the true spiral leaves this plane. This small example already contains the essential failure mode of a static ROM, in which the model can evolve only inside the space it was given.

The in-span updates use only the ROM trajectory. At an intermediate online time step $k$, before the next external correction, the ROM produces
$
    q^k_{\mathrm{ROM}}\coloneqq\Phi_0 a^k\in\mathcal{S}_0,
$
where $a^k\in\mathbb{R}^2$ is the reduced coordinate vector in the initial basis $\Phi_0$. Let $n_1,\ldots,n_s$ denote the time indices of the $s$ in-span ROM predictions streamed since the previous external correction. After these $s$ updates, the reduced covariance is
\begin{equation}
    C_s=\gamma_{\rm in}^{2s}\Sigma_0^2+
    \sum_{j=1}^{s}\gamma_{\rm in}^{2(s-j)}a^{n_j}(a^{n_j})^\top ,
    \label{eq:spiral_covariance_results}
\end{equation}
where $\Sigma_0$ contains the initial singular values associated with $\Phi_0$. We use a small value $\gamma_{\rm in}=0.1$ to make the spectral contraction visible in the spiral illustration; in the PDE experiments, the in-span forgetting factors are chosen $\approx 1$. Writing $C_s=U_s\Lambda_s U_s^\top$ gives the preconditioned singular values through $\Lambda_s^{1/2}$ and the in-plane basis rotation through
$
    \Phi_s=\Phi_0 U_s,
$ with
$
    \mathrm{span}(\Phi_s)=\mathcal{S}_0.
$

\Cref{fig:spiral_setup} shows this effect geometrically. The full-order spiral continues in three dimensions, while the ROM trajectory evolves inside the initial rank-2 plane. In this particular example, the reduced trajectory repeatedly visits one dominant in-plane direction while leaving the other nearly inactive. The covariance ellipse therefore rotates toward the visited direction and contracts along the direction that receives little trajectory energy. The initial singular values
$
    \Sigma_0=\mathrm{diag}(2.92,\;0.25)
$
are transformed by four in-span updates into
$
    \Sigma_{\mathrm{SPIN}}=
    \mathrm{diag}(2.48,\;0.03).
$
The second singular value is reduced by almost an order of magnitude, while the subspace itself is unchanged.

\begin{figure}[!t]
\centering
\includegraphics[width=\linewidth]{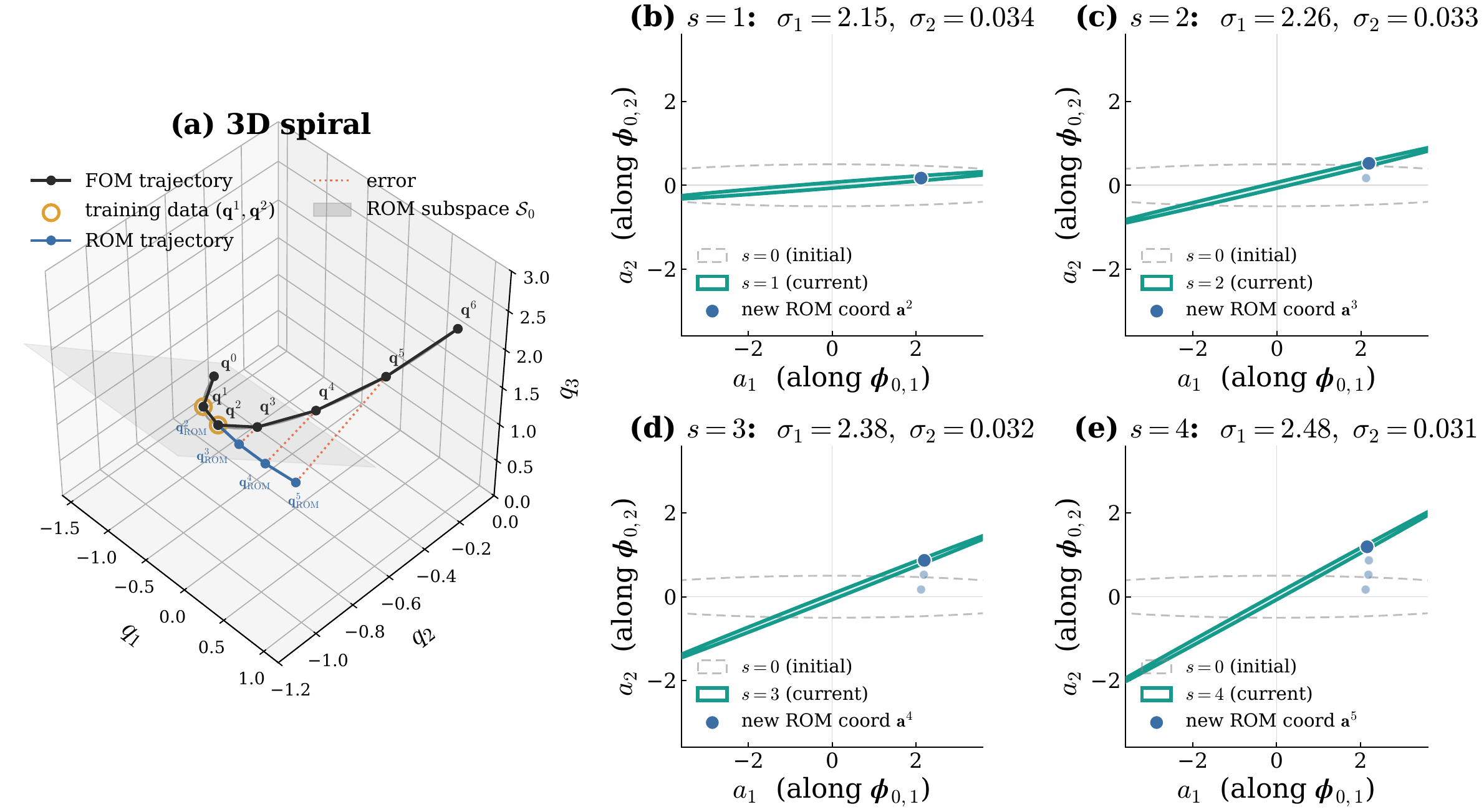}
\caption{\textbf{A closed-form spiral reveals in-span spectral preconditioning.}
\textbf{(a)} The exact full-order spiral trajectory (FOM), the rank-2 ROM trajectory advanced inside its fixed initial subspace $\mathcal{S}_0$, and the growing discrepancy (error) between them; the two snapshots $q^1,q^2$ used to build the initial basis are highlighted, and the grey plane marks $\mathcal{S}_0$.
\textbf{(b)--(e)} Covariance ellipses inside $\mathcal{S}_0$ after $s=1,2,3,4$ in-span updates, drawn in the fixed $\mathcal{S}_0$ coordinate frame with axes along $\boldsymbol{\phi}_{0,1}$ and $\boldsymbol{\phi}_{0,2}$. Each ellipse is set by the eigenvectors and eigenvalues of the reduced covariance $C_s$. Although the in-span updates use only ROM-predicted states lying in $\mathcal{S}_0$ and therefore add no new direction, the spectrum rotates and contracts. The active direction is reinforced while the unvisited direction is progressively suppressed. The subspace is unchanged; only the basis orientation and singular spectrum evolve.}
\label{fig:spiral_setup}
\end{figure}

Here, we separate two objects that are often conflated. The \emph{subspace} determines which states the ROM can represent, while the \emph{spectrum} assigns importance to the directions already present within that subspace. The in-span recursion leaves the first object unchanged but modifies the second. On its own, this cannot correct the ROM, because no new direction has entered the span. Its effect becomes visible when an out-of-span correction later arrives.

\subsection{In-span preconditioning changes how the next correction is absorbed}
To receive an out-of-span correction without having access to the exact full-order trajectory, we perform a full-order operator query. In the spiral example, the exact one-step flow is the affine map $q^{k+1}=Gq^k+c$, where $G$ and $c$ are obtained from the exact spiral dynamics in \cref{sec:methods_fom}. Starting from the ROM state $q^5_{\mathrm{ROM}}$, this gives
$
    q_{\mathrm{corr}} \coloneqq G q^5_{\mathrm{ROM}} + c .
$
At the first out-of-span correction, this snapshot is identical for the baseline adaptive ROM and the SPIN ROM. Its residual relative to the initial ROM plane has norm
$
    \rho \coloneqq \|q_{\mathrm{corr}}-\Phi_0\Phi_0^\top q_{\mathrm{corr}}\|_2
    =0.42 ,
$
so both methods receive the same out-of-span information. The difference is the basis state that receives it. In the baseline adaptive ROM, the correction is applied to the original spectrum; in the SPIN ROM, it is applied after the ROM trajectory has preconditioned the spectrum.

We now show that, in this example, in-span preconditioning acts as a weak-mode suppressor. The ROM trajectory has reduced the strength of a nearly inactive in-plane direction, so the incoming residual faces a smaller modal competitor when the update is truncated back to rank $r$. To quantify this local rank competition, we compare the residual strength $\rho^2$ with the weakest retained in-plane direction before the residual is added. For a correction with in-plane coefficients $\alpha=\Phi_{\mathrm{pre}}^\top q_{\mathrm{corr}}$, define
$
    C_{\rm corr} \coloneqq \gamma_{\rm out}^2 \Sigma_{\mathrm{pre}}^2 + \alpha\alpha^\top ,
$
which is the in-plane block of the small iSVD core covariance. We then define the residual-to-weakest-mode ratio
$
    \eta \coloneqq \frac{\rho^2}{\lambda_r(C_{\rm corr})} .
$
Here, $\eta\ll 1$ indicates that the residual is weak relative to the existing in-plane core content, while $\eta\gg 1$ indicates that it can compete for one of the retained rank-$r$ slots. This ratio is used only as a diagnostic of the local rank competition; the actual truncated update is determined by the full core SVD, which is examined in \cref{app:spiral_details}.

For the same correction snapshot, and with $\gamma_{\rm out}=1.0$ for simplicity, the baseline adaptive update gives
$
    \lambda_r(C_{\mathrm{corr,\,baseline}})=1.45
$
and
$
    \eta_{\mathrm{baseline}}\approx 0.12,
$
whereas the SPIN update gives
$
    \lambda_r(C_{\mathrm{corr,\,SPIN}})=0.05
$
and
$
    \eta_{\mathrm{SPIN}}\approx 3.67.
$
The residual norm has not changed. The increase in $\eta$ comes from the denominator, meaning that the same residual now competes against a much smaller in-plane core direction.

The geometric outcome is shown in \cref{fig:rank_competition}. With out-of-span adaptation alone, the updated plane changes only modestly and the new residual is mostly discarded. With in-span preconditioning, the updated plane swings toward the new local direction of the spiral. Quantitatively, enabling in-span learning increases the residual capture from $0.26$ to $0.80$ and the plane-change angle from $15.06^\circ$ to $53.29^\circ$, while decreasing the correction error from $1.02\times 10^{-2}$ to $8.84\times 10^{-4}$. These diagnostics are defined in \cref{sec:methods_diagnostics}. Full numerical details for the spiral example are provided in \cref{app:spiral_details}, and a comparison with several alternative fixed-rank basis updates is available in \cref{app:alternative_updates}.

\begin{figure}[!t]
\centering
\includegraphics[width=\linewidth]{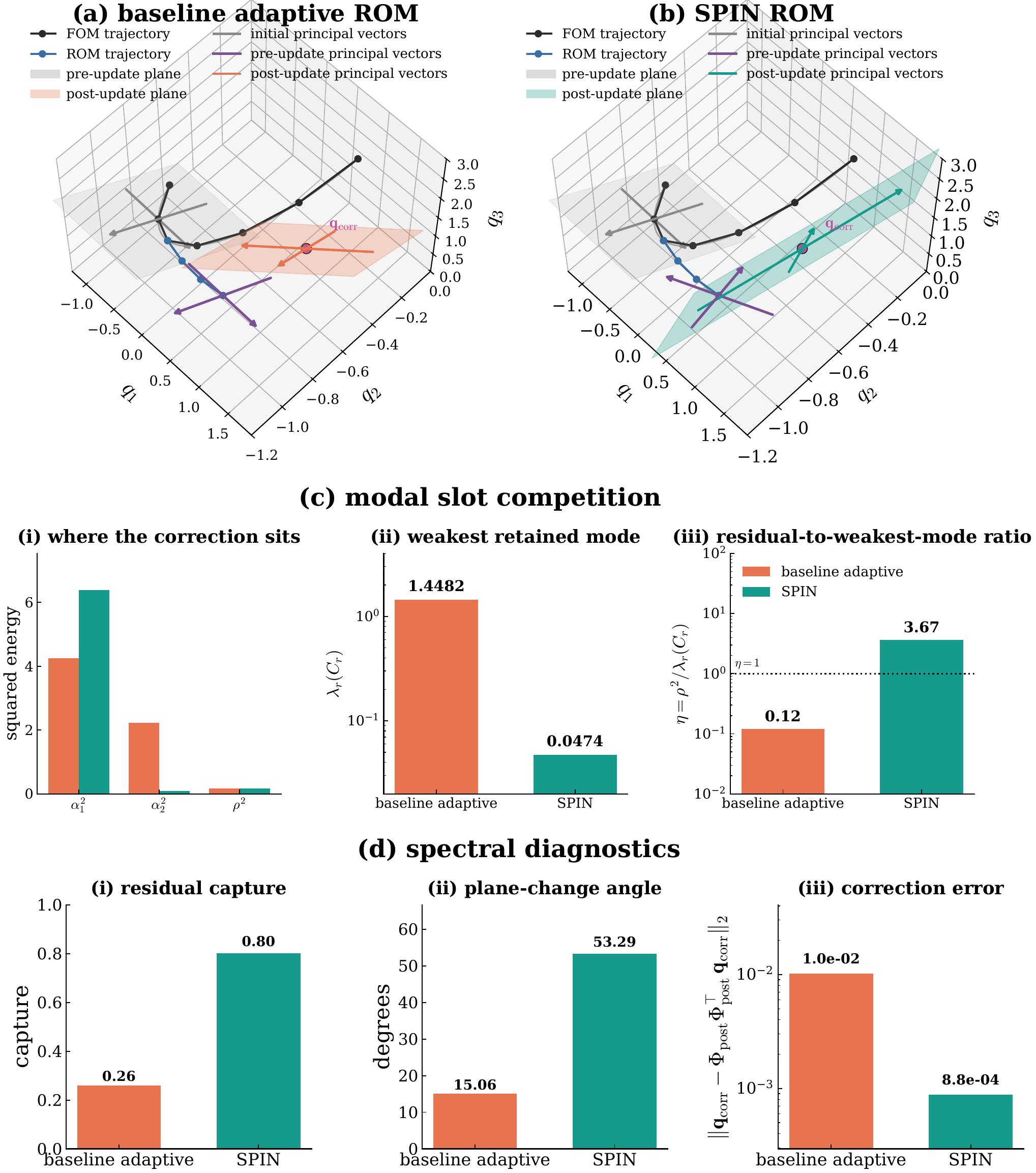}
\caption{
\textbf{The same correction is better absorbed after in-span preconditioning.}
Both adaptive ROMs receive the same correction $q_{\mathrm{corr}}$ and use the same truncated iSVD update; only the incoming basis state differs.
\textbf{(a,b)} Baseline adaptive and SPIN updates. In-span preconditioning rotates and reweights the basis before the correction arrives, so the post-update plane swings more strongly toward the local dynamics.
\textbf{(c)} Modal slot competition, including correction energy, $\lambda_r(C)$, and $\eta=\rho^2/\lambda_r(C)$.
\textbf{(d)} Residual capture, plane-change angle, and correction error.
In this spiral example, in-span preconditioning suppresses inactive modal content so the same residual can occupy the weak slot of the fixed-rank basis.
}
\label{fig:rank_competition}
\end{figure}

\subsection{SPIN improves nonlinear PDE prediction}

The spiral example isolates the mechanism, but it does not test whether the same effect survives the numerical ingredients of a practical ROM. We therefore consider two nonlinear PDEs, namely viscous Burgers and Fisher--KPP, solved with implicit time integration and hyper-reduced (via QDEIM~\cite{chaturantabut2010nonlinear, drmac2016new}) least-squares Petrov--Galerkin (LSPG)~\cite{carlberg2011efficient, carlberg2017galerkin} ROMs. This setting tests in-span learning within a broadly used framework for nonlinear scientific simulation. Viscous Burgers dynamics tests transport of a coherent structure, while Fisher--KPP reaction--diffusion tests front propagation and saturation driven by a nonlinear reaction term. In both cases, all reduced models are initialized from a negligible offline training phase, so the comparison is deliberately extrapolative.

For both PDEs we compare the full-order model, a static ROM, the baseline adaptive ROM, and the proposed SPIN ROM. The two adaptive ROMs use the same rank, QDEIM sample count, correction interval, and full-order operator query mechanism to receive the correction snapshot. The only algorithmic difference is that SPIN also updates its basis and singular values between external corrections using the ROM's own intermediate predictions. The full discretizations, LSPG--QDEIM formulation, correction procedure, and hyperparameters are given in \cref{sec:methods_rom} and~\ref{sec:methods_query}.

For each adaptive model, the forgetting factors reported here are selected from an ablation sweep that minimizes the mean relative $L_2$ error over the online trajectory. Thus, the comparison is between optimized memory configurations of the two adaptive strategies. For Burgers, the best baseline adaptive ROM uses $\gamma_{\rm out}=0.01$, while the best SPIN ROM uses $(\gamma_{\rm in},\gamma_{\rm out})=(1.0,0.25)$. For Fisher--KPP, the corresponding values are $\gamma_{\rm out}=0.1$ and $(\gamma_{\rm in},\gamma_{\rm out})=(0.9,0.25)$. The full forgetting-factor ablations, correction-interval sweeps, and additional PDE robustness tests are reported in \cref{app:hyperparameter_sensitivity}, with the benchmark parameters summarized in \cref{app:reproducibility} for reproduction purposes.

\Cref{fig:burgers_prediction} shows Burgers predictions, where we use a rank-$4$ ROM with $m=4$ QDEIM sample points, trained from only the first four full-order snapshots from $t=0$ to $t=0.004$, and evaluated over a $500$-step prediction horizon from $t=0$ to $t=0.50$, with out-of-span corrections every $z_s=10$ time steps. The static ROM fails quickly because its basis represents only the early offline regime. The baseline adaptive ROM performs substantially better and follows the advecting pulse reasonably well, showing that the external corrections supply useful missing directions. Adding the in-span channel further reduces amplitude and shape errors as the pulse moves across the domain, and the SPIN error remains consistently below the baseline adaptive error without adding full-order correction steps.

Fisher--KPP in \cref{fig:fisher_prediction} provides a more demanding test. A localized initial pulse grows through the reaction term, develops two traveling fronts, and eventually saturates over much of the domain. We again use a rank-$4$ LSPG--QDEIM ROM with $m=4$ sample points, trained from only the first four full-order snapshots from $t=0$ to $t=0.004$, and evaluated over a $500$-step prediction horizon from $t=0$ to $t=0.50$, with out-of-span corrections every $z_s=25$ time steps. The static ROM misses the expanding front after the solution leaves the training window. The baseline ROM improves on the static model, but develops visible oscillations in the saturated region and front-location errors. In contrast, the SPIN ROM follows both the plateau and the moving fronts more closely, with a larger separation between the adaptive error curves than in Burgers.

\begin{figure}[!t]
\centering
\includegraphics[width=\linewidth]{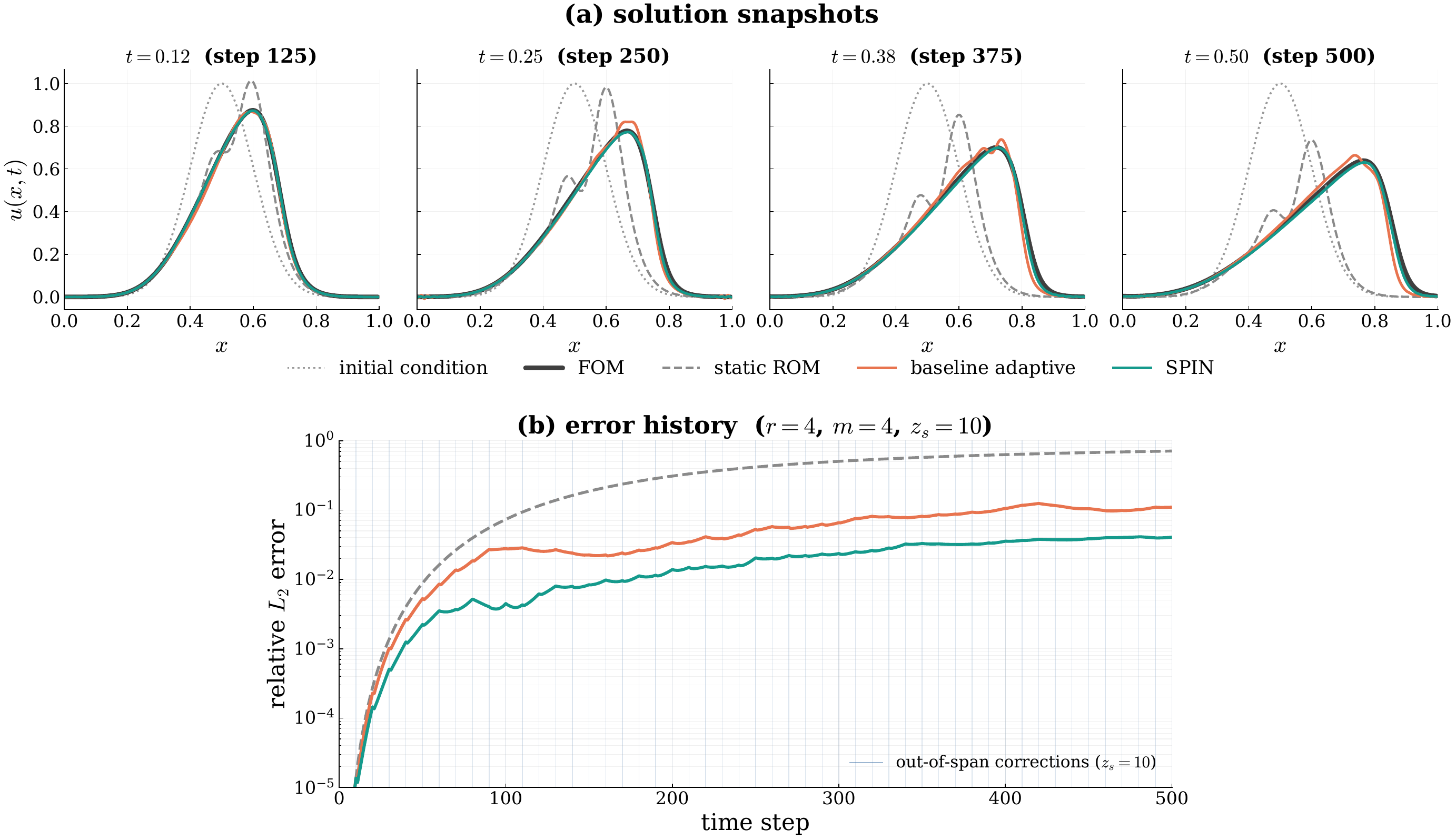}
\caption{\textbf{SPIN ROM improves viscous Burgers prediction.}
Viscous Burgers dynamics with rank $r=4$ and $m=4$ QDEIM sample points. The offline basis is trained on the initial $4$ time steps from $t=0$ to $t=0.004$, while prediction is evaluated over $500$ time steps from $t=0$ to $t=0.50$, with external corrections every $z_s=10$ steps.
\textbf{(a)} Solution profiles $u(x,t)$ at four reported times. The static ROM remains tied to the early training regime and drifts away from the transported pulse. The baseline adaptive ROM follows the FOM more closely, while the SPIN ROM further reduces the amplitude and shape errors.
\textbf{(b)} Relative $L_2$ error over time. Faint vertical marks indicate the out-of-span correction events. The SPIN ROM uses the same rank, sample count, correction interval, and number of full-order correction steps as the baseline adaptive ROM, but maintains a lower error throughout the trajectory.}
\label{fig:burgers_prediction}
\end{figure}

\begin{figure}[!t]
\centering
\includegraphics[width=\linewidth]{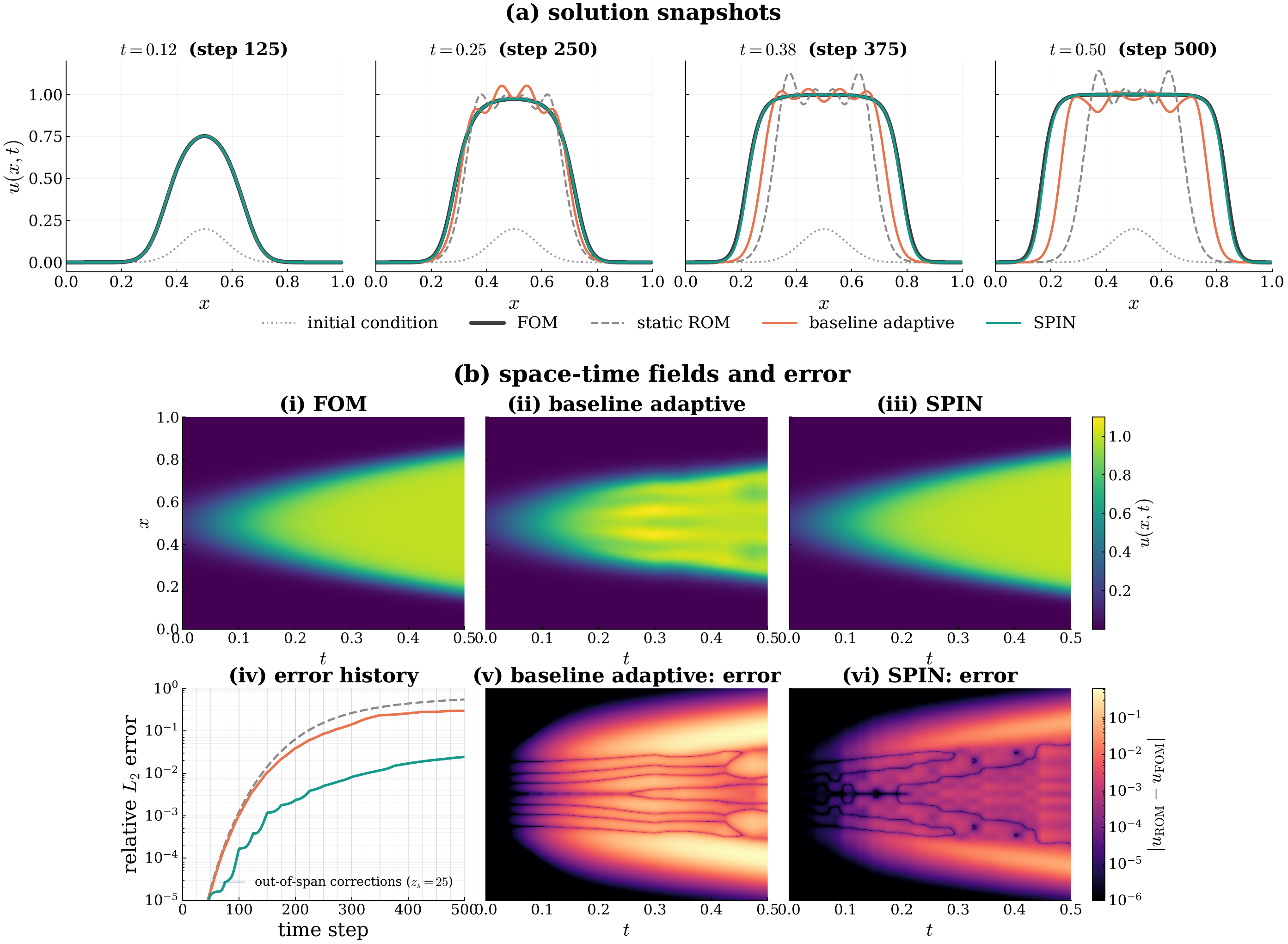}
\caption{\textbf{SPIN ROM improves Fisher--KPP front propagation.}
Fisher--KPP reaction--diffusion dynamics with rank $r=4$ and $m=4$ QDEIM sample points. The offline basis is trained on the initial $4$ time steps from $t=0$ to $t=0.004$, while prediction is evaluated over $500$ time steps from $t=0$ to $t=0.50$, with external corrections every $z_s=25$ steps.
\textbf{(a)} Solution profiles $u(x,t)$ at four reported times. The static ROM misses the expanding fronts after the solution leaves the training window. The baseline adaptive ROM improves on the static model but develops oscillations in the saturated region, while the SPIN ROM tracks the plateau and moving fronts more closely.
\textbf{(b)} Space--time fields and error. The FOM, baseline adaptive ROM, and SPIN ROM are shown as fields over $(x,t)$, together with the relative $L_2$ error history and pointwise absolute error fields. The SPIN ROM produces a visibly cleaner space--time structure and substantially smaller error over most of the trajectory.}
\label{fig:fisher_prediction}
\end{figure}

\subsection{Spectral diagnostics confirm the in-span mechanism}

The PDE results show that SPIN adaptation improves prediction. We next examine whether the same spectral mechanism identified in the spiral also appears in these nonlinear systems. We therefore measure three diagnostics that separate modal reweighting, in-plane basis rotation, and genuine subspace motion, computed along the online trajectory for the baseline adaptive and SPIN ROMs. These results are shown in \cref{fig:diagnostics} for both PDEs.

The first diagnostic is the time history of the singular values carried by the adaptive basis. In a baseline adaptive ROM, these values change only at external correction events. In a SPIN ROM, they are updated at every intermediate step by the ROM trajectory itself, directly reflecting the reduced covariance update in Eq.~\eqref{eq:inspan_covariance_update_results}. Unlike the first spiral correction, where one nearly inactive direction is strongly suppressed, the PDE spectra show richer trajectory-dependent behavior, where several retained modes are repeatedly reinforced between correction events. This is consistent with  $\gamma_{\rm in} \approx 1.0$ for Burgers and Fisher--KPP, which imply little or no in-span forgetting. The diagnostic message is therefore that successful in-span learning makes the spectral state trajectory-informed.

The second diagnostic measures basis reorientation through
$
    \left\| I - \Phi_{k+1}^{\top}\Phi_k \right\|_F .
$
This quantity is sensitive to changes in the orthonormal coordinate frame. It can be nonzero even when the two bases span essentially the same subspace. For the baseline adaptive ROM, the diagnostic is zero between correction events and spikes when an external correction updates the basis. For the SPIN ROM, it remains nonzero throughout the intervals between corrections. This confirms that the ROM predictions are not merely rescaling singular values, and they continuously rotate the basis inside the current subspace.

The third diagnostic isolates genuine subspace change. We compute the principal angles $\theta_i$ between consecutive subspaces from the singular values of $\Phi_k^\top\Phi_{k+1}$ and report
$
    \max_i \sin \theta_i .
$
This quantity is insensitive to a pure rotation of the coordinate frame inside the same subspace. Between external corrections, it remains close to numerical zero for the SPIN ROM, confirming that in-span updates preserve the subspace up to numerical precision. At correction events, it jumps for both adaptive models, because only then does out-of-span information enter. The jump is not identical for the two methods, showing that the same type of external event acts on different preconditioned basis states.

\begin{figure}[!t]
\centering
\includegraphics[width=\linewidth]{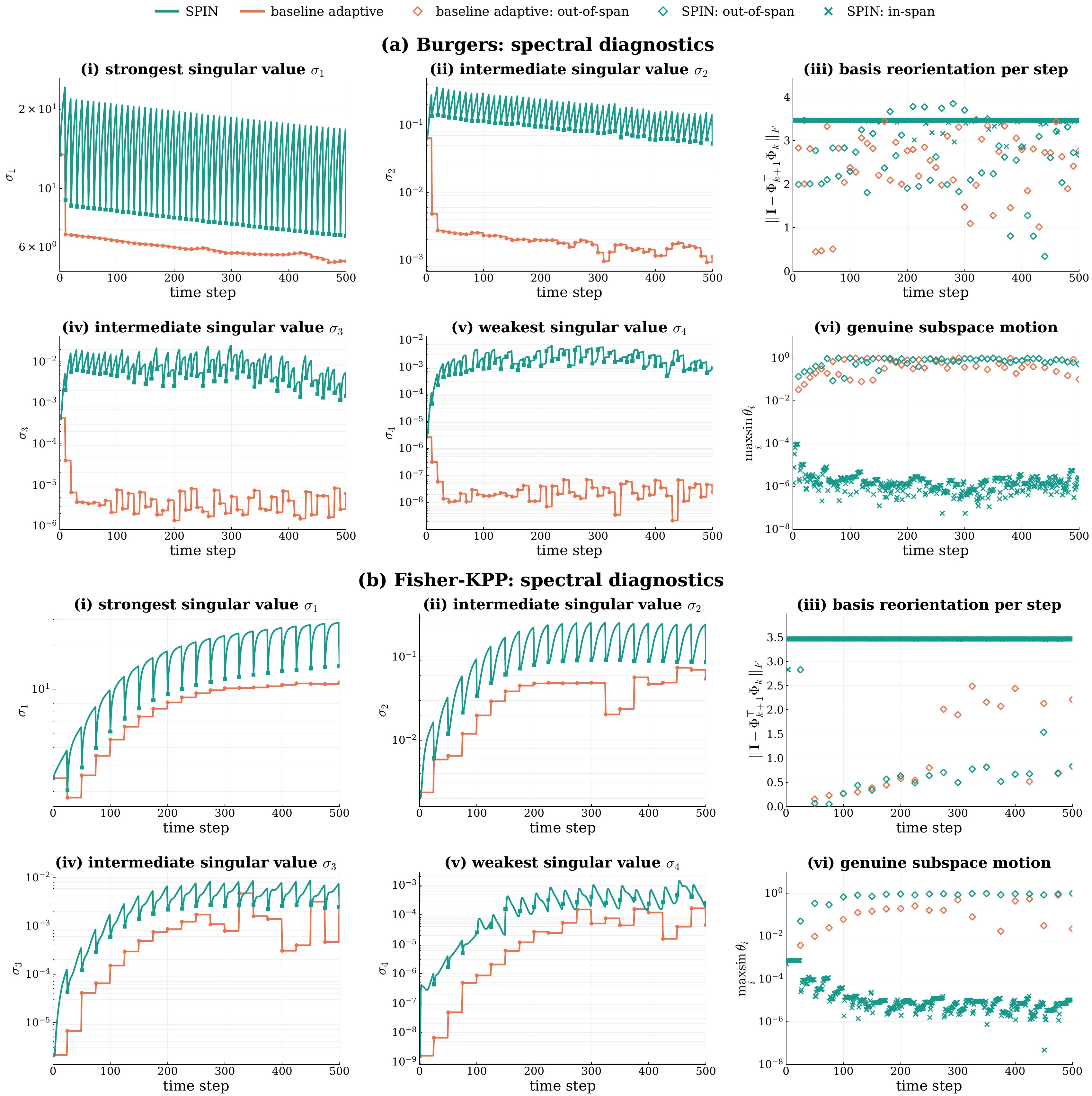}
\caption{
\textbf{The algebraic fingerprints of in-span learning in nonlinear ROMs.}
Diagnostics for \textbf{(a)} Burgers and \textbf{(b)} Fisher--KPP, comparing the baseline adaptive and SPIN ROMs.
Panels \textbf{(i)}, \textbf{(ii)}, \textbf{(iv)} and \textbf{(v)} show the retained singular values $\sigma_1,\ldots,\sigma_4$.
Panel \textbf{(iii)} shows basis reorientation, $\|\mathbf{I}-\Phi_{k+1}^{\top}\Phi_k\|_F$.
Panel \textbf{(vi)} shows genuine subspace motion, $\max_i\sin\theta_i$.
In-span updates continuously reshape and rotate the basis inside an essentially fixed subspace, while out-of-span corrections supply the geometric motion.
}
\label{fig:diagnostics}
\end{figure}

\section{Discussion}

Projection-based ROMs are usually described by their subspace. That description is correct, but incomplete. A reduced model also carries a basis spectrum, and in fixed-rank adaptive ROMs this spectrum shapes how future updates allocate limited modal capacity. We showed that these two objects can be adapted by different information streams. Out-of-span corrections move the subspace because they contain directions the ROM cannot currently represent, while in-span predictions reorganize the spectral state inside the existing subspace. This separation reveals a hidden degree of freedom in adaptive model reduction. We demonstrated in-span learning first on a closed-form spiral, where in-span update acts as a weak-mode suppressor, and then on Burgers and Fisher--KPP dynamics, where the same in-span channel improves hyper-reduced nonlinear ROM predictions. Across these examples, the common theme is trajectory awareness. The ROM's own predictions are not merely outputs; they are information about how the current representation is being used.

Several insights follow from this view:
\begin{itemize}[leftmargin=1.4em,itemsep=0.3em]
    \item \textbf{In-span predictions carry adaptive information.} Although they cannot move the trial subspace, they can rotate and reweight the directions inside it.
    \item \textbf{Spectral preconditioning is memory-controlled.} In a low-memory regime, it can suppress inactive directions and open modal capacity, as in the spiral. In high-memory regimes, it can instead reinforce several retained modes while still making the basis trajectory-informed.
    \item \textbf{In-span and out-of-span learning are complementary.} External corrections supply missing directions, while in-span updates prepare the spectral state on which those corrections act.
    \item \textbf{The gains are obtained without additional full-order queries.} The SPIN ROM uses the same external correction schedule as the baseline adaptive model, but changes how the available information is organized and used.
\end{itemize}

These results also clarify where the present framework can be improved. External information remains necessary, because in-span updates cannot create directions absent from the current reduced subspace and cannot rescue a ROM whose trajectory is already a poor guide to the local dynamics. The correction interval and forgetting factors are also decided before the simulation and remain fixed during the online prediction, whereas future methods could choose when to request out-of-span information and how strongly to forget based on online indicators such as residual growth or spectral drift. Higher-dimensional flows, multiphysics systems, and noisy or partial observations remain important research directions to investigate.

We can also draw a connection between in-span learning and in-context learning in large language models~\cite{brown2020language,xie2022explanation,akyurek2023learning}. This connection is not just an analogy. In the linear-attention view of sequence models, attention layers maintain a state matrix updated by rank-one outer products of internally generated representations, often discounted by a gate, and use that state to adapt their outputs without changing trained weights~\cite{katharopoulos2020linear,schlag2021fastweights}. The in-span update has the same basic skeleton. It accumulates a state $C$ from outer products $a_k a_k^\top$ of the model's own reduced coordinates, discounts that state by $\gamma_{\rm in}^2$, and uses it to reorganize a fixed representation. The shared structure is test-time adaptation through a decaying outer-product accumulator over self-generated representations. Adaptive ROMs may therefore provide a low-rank dynamical-systems setting in which to study how a fixed representation is reorganized by its own context.

More broadly, in-span learning extends beyond linear ROMs. It suggests that linear ROMs, nonlinear manifold models, and scientific machine-learning surrogates in general may all carry useful self-generated information in their evolving trajectories, creating an unexploited route toward more adaptive computational models.

\section{Methods} \label{sec:methods}

\subsection{Full-order models and time discretization} \label{sec:methods_fom}

We consider three dynamical systems. The first is a closed-form spiral in $\mathbb{R}^3$, providing a detailed and transparent view of the mechanism. The second and third are nonlinear partial differential equations, namely viscous Burgers dynamics and Fisher--KPP reaction--diffusion, used to test the same mechanism in more practical settings.

At the fully discrete level, the full-order model is written as a nonlinear time-stepping problem for a state $q^n\in\mathbb{R}^N$ at time $t_n$,
\begin{equation}
    R^n(q^n;q^{n-1}) = 0,
    \label{eq:fom_residual_general}
\end{equation}
where $R^n:\mathbb{R}^N\rightarrow\mathbb{R}^N$ denotes the residual of the chosen time discretization. For the PDE benchmarks, we use implicit backward Euler,
\begin{equation}
    R^n(q^n;q^{n-1})
    \coloneqq
    q^n-q^{n-1}-\Delta t\, f(q^n),
    \label{eq:backward_euler_residual}
\end{equation}
where $f$ is the semi-discrete spatial operator. The nonlinear system \eqref{eq:backward_euler_residual} is solved by Newton iterations in the full-order model (FOM) and by a sampled least-squares problem in the ROM.

The spiral test case is defined by the exact trajectory
\begin{equation}
    q(t)\coloneqq
    \begin{bmatrix}
    \cos t\\
    \sin t\\
    \alpha t
    \end{bmatrix},
    \qquad
    \alpha=0.4 .
    \label{eq:methods_spiral_trajectory}
\end{equation}
This trajectory solves the affine linear system
\begin{equation}
    \dot{q}=Aq+b,
    \qquad
    A\coloneqq
    \begin{bmatrix}
    0 & -1 & 0\\
    1 & 0 & 0\\
    0 & 0 & 0
    \end{bmatrix},
    \qquad
    b\coloneqq
    \begin{bmatrix}
    0\\
    0\\
    \alpha
    \end{bmatrix}.
    \label{eq:methods_spiral_ode}
\end{equation}
For time step $\Delta t=0.35$, the exact discrete map is
\begin{equation}
    q^{k+1}=Gq^k+c,
    \qquad
    G\coloneqq\exp(A\Delta t),
    \qquad
    c\coloneqq
    \begin{bmatrix}
    0\\
    0\\
    \alpha\Delta t
    \end{bmatrix}.
    \label{eq:methods_spiral_discrete}
\end{equation}
The simulation begins at $t_0=4.0$. Because Eq.~\eqref{eq:methods_spiral_discrete} is exact, the spiral experiment contains no time-integration error. All discrepancies between the FOM and ROM are therefore caused by the reduced representation and its updates.

The Burgers benchmark follows the one-dimensional viscous Burgers equation
\begin{equation}
    \partial_t u + u\,\partial_x u
    =
    \nu\,\partial_{xx}u,
    \qquad
    x\in[0,1],
    \label{eq:methods_burgers}
\end{equation}
with periodic boundary conditions and viscosity $\nu=10^{-2}$. The spatial domain is discretized with $N_x=1000$ uniform grid points. The convective term is approximated by a first-order upwind finite-difference scheme and the diffusive term by a centred second-order method. Time integration uses backward Euler with $\Delta t=10^{-3}$ over a horizon of $500$ time steps to propagate a Gaussian initial condition.

The Fisher--KPP benchmark is
\begin{equation}
    \partial_t u
    =
    D\,\partial_{xx}u+\beta u(1-u),
    \qquad
    x\in[0,1],
    \label{eq:methods_fisher}
\end{equation}
with periodic boundary conditions, diffusion coefficient $D=10^{-4}$, and reaction rate $\beta=20$. The spatial domain is discretized with $N_x=256$ uniform grid points using centred finite differences. Time integration again uses backward Euler with $\Delta t=10^{-3}$ over $500$ time steps. Starting from a localized pulse, the solution develops travelling fronts and then saturates toward $u\approx 1$ over much of the domain.

\subsection{Reduced-order representation} \label{sec:methods_rom}

All ROMs use an affine, scaled trial representation of the full-order state,
\begin{equation}
    q^n
    \approx
    q_{\mathrm{ref}} + D^{-1}\Phi a^n,
    \label{eq:methods_rom_ansatz}
\end{equation}
where $q_{\mathrm{ref}}\in\mathbb{R}^N$ is a reference state, $D\in\mathbb{R}^{N\times N}$ is a nonsingular diagonal scaling matrix, $\Phi\in\mathbb{R}^{N\times r}$ is an orthonormal basis, and $a^n\in\mathbb{R}^r$ is the reduced coordinate vector. The rank satisfies $r\ll N$.

The initial basis is constructed from a short training snapshot matrix
\begin{equation}
    X_{\mathrm{train}}
    \coloneqq
    \begin{bmatrix}
    q^1-q_{\mathrm{ref}} &
    q^2-q_{\mathrm{ref}} &
    \cdots &
    q^{N_s}-q_{\mathrm{ref}}
    \end{bmatrix},
    \label{eq:methods_training_matrix}
\end{equation}
where $N_s$ is the number of training snapshots. After scaling, we compute
\begin{equation}
    D X_{\mathrm{train}} = U\Sigma V^\top,
    \label{eq:methods_pod_svd}
\end{equation}
and set $\Phi$ equal to the first $r$ left singular vectors. In all PDE experiments reported in the main text, the offline window contains only four FOM snapshots and the ROM rank is $r=4$. The static ROM therefore represents a deliberately minimal offline model, while the adaptive ROMs are tested in a genuinely extrapolative setting.

For the spiral, the dynamics are linear and the discrete flow is known exactly, so we use a simple Galerkin projection. The initial rank-2 basis $\Phi_0$ is obtained from the two snapshots $q^1$ and $q^2$. The reduced state satisfies
\begin{equation}
    a^{n+1}=A_r a^n+c_r,
    \qquad
    A_r\coloneqq\Phi_0^\top G\Phi_0,
    \qquad
    c_r\coloneqq\Phi_0^\top c,
    \label{eq:methods_spiral_rom}
\end{equation}
with initial coordinate $a^2=\Phi_0^\top D(q^2-q_{\mathrm{ref}})$. The corresponding ROM state is
\begin{equation}
    q^n_{\mathrm{ROM}}\coloneqq q_{\mathrm{ref}} + D^{-1}\Phi_0 a^n .
    \label{eq:methods_spiral_rom_state}
\end{equation}
Note that in the spiral results, we take a zero reference state $q_{\mathrm{ref}}$ and an identity scaling matrix $D$ for simplicity. Since the basis is rank two, the ROM trajectory remains in the initial plane
\begin{equation}
    \mathcal{S}_0\coloneqq\mathrm{span}(\Phi_0).
    \label{eq:methods_initial_plane}
\end{equation}

For Burgers and Fisher--KPP, since the full-order models utilize implicit temporal integration, we use a least-squares Petrov--Galerkin (LSPG)~\cite{carlberg2011efficient, carlberg2017galerkin} formulation at the fully discrete level. Substituting the reduced ansatz \eqref{eq:methods_rom_ansatz} into the FOM residual gives the reduced least-squares problem
\begin{equation}
    a^n
    =
    \underset{\widehat{a}\in\mathbb{R}^r}{\arg\min}
    \left\|
    D\,R^n
    \left(
    q_{\mathrm{ref}}+D^{-1}\Phi\widehat{a};
    q^{n-1}_{\mathrm{ROM}}
    \right)
    \right\|_2^2 .
    \label{eq:methods_lspg_full}
\end{equation}
Here $q^{n-1}_{\mathrm{ROM}}\coloneqq q_{\mathrm{ref}}+D^{-1}\Phi a^{n-1}$ is the previous ROM state. This LSPG formulation minimizes the fully discrete residual and is well-suited to implicit time stepping.

To reduce the online cost, we use QDEIM hyper-reduction~\cite{chaturantabut2010nonlinear, drmac2016new}. Let $S\in\mathbb{R}^{N\times m}$ be a sampling matrix that selects $m$ residual entries, with $m\ll N$. The sampled LSPG problem is
\begin{equation}
    a^n
    =
    \underset{\widehat{a}\in\mathbb{R}^r}{\arg\min}
    \left\|
    S^\top D\,R^n
    \left(
    q_{\mathrm{ref}}+D^{-1}\Phi\widehat{a};
    q^{n-1}_{\mathrm{ROM}}
    \right)
    \right\|_2^2 .
    \label{eq:methods_lspg_sampled}
\end{equation}
The sample indices are chosen by QDEIM through a column-pivoted QR factorization of $\Phi^\top$. In the PDE experiments, we use $m=4$ sample points.

\subsection{Information channels in an adaptive ROM} \label{sec:methods_info}

The adaptive ROM receives two kinds of snapshots, distinguished by their relationship to the current reduced subspace.

\begin{definition}[Out-of-span and in-span snapshots]
\label{def:out_in_span_snapshots}
Let $\Phi\in\mathbb{R}^{N\times r}$ be the current orthonormal basis, and let $y\in\mathbb{R}^N$ be a preprocessed snapshot passed to a basis update. Define the projection coefficient, residual, and residual norm by
\begin{equation}
    w\coloneqq\Phi^\top y,
    \qquad
    r_y\coloneqq y-\Phi w,
    \qquad
    \rho\coloneqq\|r_y\|_2 .
    \label{eq:methods_snapshot_decomposition}
\end{equation}
The snapshot is \emph{in-span} if $\rho=0$, equivalently $y\in\mathrm{span}(\Phi)$. It is \emph{out-of-span} if $\rho>0$.
\end{definition}

In the algorithms studied here, out-of-span snapshots are generated by occasional full-order operator queries. These snapshots may contain components not represented by the current basis and can therefore move the trial subspace. In-span snapshots are the ROM's own predicted states. They satisfy $y=\Phi a$ by construction and therefore cannot introduce a new direction. The next Methods subsections show that, despite this zero residual, in-span snapshots still update the spectral state of the basis through the singular values and in-plane rotations of an iSVD update.

\subsection{iSVD update with forgetting} \label{sec:methods_isvd}

The online basis update used throughout the paper is an iSVD with forgetting. The update maintains a rank-$r$ approximation of an exponentially weighted snapshot history.

\begin{definition}[Truncated iSVD update with forgetting~\cite{bunch1978updating,brand2002incremental,brand2006fast}]
\label{def:truncated_isvd}
Let $\Phi\in\mathbb{R}^{N\times r}$ be an orthonormal basis, let $\Sigma=\mathrm{diag}(\sigma_1,\ldots,\sigma_r)$ contain the retained singular values associated with $\Phi$, and let $y\in\mathbb{R}^N$ be a preprocessed incoming snapshot. Define the projection coefficient, residual, and residual norm as in Eq.~\eqref{eq:methods_snapshot_decomposition}.
If $\rho>0$, set $\widehat{r}\coloneqq r_y/\rho$. For a forgetting factor $\gamma\in(0,1]$, form the core matrix
\begin{equation}
    K\coloneqq
    \begin{bmatrix}
    \gamma\Sigma & w\\
    0^\top & \rho
    \end{bmatrix}
    \in\mathbb{R}^{(r+1)\times(r+1)} .
    \label{eq:isvd_core}
\end{equation}
Let
\begin{equation}
    K=U_K\Sigma_K V_K^\top
    \label{eq:isvd_core_svd}
\end{equation}
be its singular-value decomposition, with singular values in non-increasing order, obtained from the eigendecomposition of the full core covariance
\begin{equation}
    KK^\top
    =
    \begin{bmatrix}
    \gamma^2\Sigma^2+ww^\top & \rho w\\
    \rho w^\top & \rho^2
    \end{bmatrix}
    =
    U_K\Sigma_K^2U_K^\top .
    \label{eq:isvd_core_covariance}
\end{equation}
 The rank-$r$ update is
\begin{equation}
    \Phi^+ =
    \begin{bmatrix}
    \Phi & \widehat{r}
    \end{bmatrix}
    U_K(:,1:r),
    \qquad
    \Sigma^+ = \Sigma_K(1:r,1:r).
    \label{eq:isvd_update}
\end{equation}
If $\rho=0$, the residual direction is not defined and the update is interpreted through the reduced core described later in \cref{thm:inspan_update}.
\end{definition}

The update in \cref{def:truncated_isvd} is equivalent to appending the new snapshot to a weighted history matrix.
Let $Y_k$ denote the exponentially weighted data matrix containing the snapshots assimilated up to update step $k$, \[ Y_k \coloneqq \begin{bmatrix} \gamma^{k-1}y_1 & \gamma^{k-2}y_2 & \cdots & \gamma y_{k-1} & y_k \end{bmatrix}. \] The current rank-$r$ representation approximates this weighted data matrix by $Y_k\approx \Phi\Sigma V^\top$. Receiving a new snapshot $y$ gives the augmented weighted history \[ Y_{k+1} = \begin{bmatrix} \gamma Y_k & y \end{bmatrix}. \]

The small matrix $K$ in Eq.~\eqref{eq:isvd_core} contains all information needed to update the left singular vectors of this augmented matrix. The forgetting factor $\gamma$ discounts previous information before the new snapshot is added, and acts as a memory valve in this sense.

Equivalently, the update tracks the dominant eigenspace of an exponentially weighted second-moment matrix. If
$
    M_k \coloneqq Y_kY_k^\top
$
denotes the weighted second-moment matrix in the ambient coordinates, then the weighted-history update gives
\begin{equation}
    M_{k+1}
    =
    \gamma^2 M_k + yy^\top .
    \label{eq:weighted_covariance_full}
\end{equation}

\subsection{In-span learning preserves the subspace and reorganizes the spectrum} \label{sec:methods_isl}

We now specialize \cref{def:truncated_isvd} to the case in which the incoming snapshot is a ROM prediction. In scaled coordinates, such a snapshot has the form
\begin{equation}
    y=\Phi a,
    \qquad
    a\in\mathbb{R}^r .
    \label{eq:inspan_snapshot}
\end{equation}
Hence, according to Eq.~\eqref{eq:methods_snapshot_decomposition}, $w=a$, $r_y=0$, and $\rho=0$. No new ambient direction can be added. Nevertheless, the singular values and the coordinate frame inside the subspace can change.

\begin{theorem}[Subspace-preserving spectral preconditioning by in-span learning]
\label{thm:inspan_update}
Let $\Phi\in\mathbb{R}^{N\times r}$ be orthonormal, let $\Sigma=\mathrm{diag}(\sigma_1,\ldots,\sigma_r)$, and let $y=\Phi a$ be an in-span snapshot. Applying the iSVD update with forgetting factor $\gamma_{\rm in}\in(0,1]$ gives
\begin{equation}
    \Phi^+=\Phi U,
    \qquad
    \Sigma^+
    =
    \Lambda^{1/2},
    \label{eq:inspan_update_result}
\end{equation}
where $U\in\mathbb{R}^{r\times r}$ is orthogonal and
\begin{equation}
    U\Lambda U^\top
    =
    \gamma_{\rm in}^2\Sigma^2 + aa^\top .
    \label{eq:inspan_reduced_covariance}
\end{equation}
Consequently,
\begin{equation}
    \mathrm{span}(\Phi^+)=\mathrm{span}(\Phi).
    \label{eq:inspan_span_preserved}
\end{equation}
The singular values obey the rank-one interlacing inequalities
\begin{equation}
    \sigma_1^+
    \geq
    \gamma_{\rm in}\sigma_1
    \geq
    \sigma_2^+
    \geq
    \cdots
    \geq
    \gamma_{\rm in}\sigma_{r-1}
    \geq
    \sigma_r^+
    \geq
    \gamma_{\rm in}\sigma_r .
    \label{eq:inspan_interlacing}
\end{equation}
\end{theorem}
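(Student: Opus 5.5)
The plan is to make sense of the $\rho=0$ case of \cref{def:truncated_isvd} through the weighted-history interpretation, and then to reduce everything to an $r\times r$ symmetric eigenproblem.

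Since $y=\Phi a$, we have $w=a$ and $\rho=0$. The augmented weighted history satisfies $Y_{k+1}=[\gamma_{\rm in}Y_k\;\;y]$. Using the rank-$r$ representation $Y_k\approx\Phi\Sigma V^\top$, this becomes
\[
Y_{k+1}\approx\Phi\,[\gamma_{\rm in}\Sigma\;\;a]\begin{bmatrix}V^\top&0\\0&1\end{bmatrix}.
\]
The right factor has orthonormal rows, so the left singular structure is that of the reduced core $K_0\coloneqq[\gamma_{\rm in}\Sigma\;\;a]\in\mathbb{R}^{r\times(r+1)}$. This reduced core is exactly the first $r$ rows of $K$ in \eqref{eq:isvd_core} with $\rho=0$.

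Equivalently, in covariance form, \eqref{eq:weighted_covariance_full} gives $M_{k+1}=\gamma_{\rm in}^2\Phi\Sigma^2\Phi^\top+\Phi aa^\top\Phi^\top=\Phi(\gamma_{\rm in}^2\Sigma^2+aa^\top)\Phi^\top$. I would also note that the full $(r+1)$-core covariance \eqref{eq:isvd_core_covariance} with $\rho=0$ is block diagonal, with blocks $\gamma_{\rm in}^2\Sigma^2+aa^\top$ and $0$. Its leading $r$ eigenpairs therefore come from the first block, and the spurious direction $\widehat r$ is multiplied by zero. This shows the definition is consistent in the limit $\rho\to 0$.

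Next, diagonalize the symmetric positive semidefinite matrix $\gamma_{\rm in}^2\Sigma^2+aa^\top=U\Lambda U^\top$, with $U$ orthogonal and the eigenvalues in non-increasing order. Then $M_{k+1}=(\Phi U)\Lambda(\Phi U)^\top$. The columns of $\Phi U$ are orthonormal, since $(\Phi U)^\top\Phi U=U^\top U=I$. Hence this is an eigendecomposition of $M_{k+1}$ restricted to its range, and the update is $\Phi^+=\Phi U$, $\Sigma^+=\Lambda^{1/2}$, which is \eqref{eq:inspan_update_result}. Because $U$ is invertible, $\mathrm{span}(\Phi U)=\mathrm{span}(\Phi)$, giving \eqref{eq:inspan_span_preserved}.

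For the interlacing \eqref{eq:inspan_interlacing}, apply the standard rank-one update interlacing (Weyl/Cauchy) for the positive semidefinite perturbation $aa^\top$ of the diagonal matrix $D\coloneqq\gamma_{\rm in}^2\Sigma^2$, whose eigenvalues $\gamma_{\rm in}^2\sigma_i^2$ are non-increasing. This gives
\[
\lambda_1\ge\gamma_{\rm in}^2\sigma_1^2\ge\lambda_2\ge\cdots\ge\gamma_{\rm in}^2\sigma_r^2 .
\]
I would prove it via Courant--Fischer. First, $\lambda_i\ge d_i$ because $aa^\top\succeq0$. Second, $\lambda_{i+1}\le d_i$ follows by restricting the min--max to the subspace $\{x: x\perp e_1,\dots,e_i,\; x\perp a\}$ adjusted appropriately; concretely, use $\lambda_{i+1}(D+aa^\top)\le\lambda_i(D)+\lambda_2(aa^\top)=d_i$ by Weyl's inequality, since $aa^\top$ has $\lambda_2=0$. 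Taking nonnegative square roots preserves all the inequalities.

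The main obstacle is the definitional one: \cref{def:truncated_isvd} leaves $\widehat r$ undefined when $\rho=0$. The proof must therefore justify that the reduced core $K_0$, or equivalently the first diagonal block of \eqref{eq:isvd_core_covariance}, is the correct interpretation. I would handle this via the weighted-history argument above, treating the rank-$r$ representation of $Y_k$ as exact, which is the standard iSVD convention. The remaining steps are routine linear algebra.
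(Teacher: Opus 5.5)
Your proposal is correct and follows essentially the same route as the paper. With $\rho=0$ the core reduces to $K_{\rm in}=[\gamma_{\rm in}\Sigma\;\;a]$, whose left singular vectors diagonalize $\gamma_{\rm in}^2\Sigma^2+aa^\top$; this gives $\Phi^+=\Phi U$ and span preservation, and the interlacing follows from the rank-one positive semidefinite perturbation. Your weighted-history and block-diagonal justification of the $\rho=0$ case, together with the Weyl-inequality derivation ($\lambda_i\ge\gamma_{\rm in}^2\sigma_i^2$ and $\lambda_{i+1}\le\gamma_{\rm in}^2\sigma_i^2+\lambda_2(aa^\top)=\gamma_{\rm in}^2\sigma_i^2$), spells out what the paper asserts by citation, and you can drop the vague Courant--Fischer sentence.
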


\begin{proof}
For an in-span snapshot, $y=\Phi a$. Therefore $w=\Phi^\top y=a$ and $\rho=0$. The core matrix in Eq.~\eqref{eq:isvd_core} has zero last row,
\begin{equation}
    K=
    \begin{bmatrix}
    \gamma_{\rm in}\Sigma & a\\
    0^\top & 0
    \end{bmatrix}.
\end{equation}
Its nonzero singular values are the singular values of the reduced matrix
\begin{equation}
    K_{\mathrm{in}}=
    \begin{bmatrix}
    \gamma_{\rm in}\Sigma & a
    \end{bmatrix}
    \in\mathbb{R}^{r\times(r+1)} .
\end{equation}
Since
\begin{equation}
    K_{\mathrm{in}}K_{\mathrm{in}}^\top
    =
    \gamma_{\rm in}^2\Sigma^2+aa^\top ,
\end{equation}
the left singular vectors of $K_{\mathrm{in}}$ are the eigenvectors of the reduced covariance in Eq.~\eqref{eq:inspan_reduced_covariance}. The updated basis is therefore $\Phi^+=\Phi U$ for an orthogonal matrix $U$, which proves subspace preservation. The interlacing inequalities follow from Cauchy's interlacing theorem for a positive semidefinite rank-one perturbation of the diagonal matrix $\gamma_{\rm in}^2\Sigma^2$.
\end{proof}

After $s$ consecutive in-span updates with reduced coordinates $a^{n_1},\ldots,a^{n_s}$, the reduced covariance is
\begin{equation}
    C_s
    =
    \gamma_{\rm in}^{2s}\Sigma_0^2
    +
    \sum_{j=1}^{s}
    \gamma_{\rm in}^{2(s-j)}
    a^{n_j}(a^{n_j})^\top .
    \label{eq:inspan_covariance_after_s}
\end{equation}
This is the covariance recursion used in the spiral example. It shows explicitly how the ROM trajectory reshapes the spectrum. Directions aligned with recent coefficient vectors can receive repeated rank-one reinforcement, while directions receiving little trajectory energy can be suppressed by forgetting. Here, we say that a direction is \emph{reinforced} if its post-update directional energy exceeds its pre-update value, and \emph{suppressed} if its post-update directional energy is smaller.

\begin{definition}[Directional reinforcement and suppression]
\label{def:directional_reinforcement}
Let $\xi\in\mathbb{R}^r$ be a unit direction in the current, pre-update reduced coordinates. For an in-span update with reduced coordinate $a$, define the pre-update and post-update directional energies by
\begin{equation}
    E_\xi\coloneqq\xi^\top\Sigma^2\xi,
    \qquad
    E_\xi^+\coloneqq\xi^\top
    \left(
    \gamma_{\rm in}^2\Sigma^2+aa^\top
    \right)
    \xi .
    \label{eq:directional_energies}
\end{equation}
We say that the fixed pre-update direction $\xi$ is \emph{reinforced} if $E_\xi^+>E_\xi$ and \emph{suppressed} if $E_\xi^+<E_\xi$.
\end{definition}

Using Eq.~\eqref{eq:directional_energies},
\begin{equation}
    E_\xi^+-E_\xi
    =
    (\xi^\top a)^2
    -
    (1-\gamma_{\rm in}^2)E_\xi .
    \label{eq:directional_energy_change}
\end{equation}
Therefore the fixed pre-update direction $\xi$ is reinforced if
\begin{equation}
    |\xi^\top a|
    >
    \sqrt{1-\gamma_{\rm in}^2}\,
    \sqrt{E_\xi},
    \label{eq:directional_reinforcement_threshold}
\end{equation}
and suppressed if the inequality is reversed. In particular, for a pre-update singular-vector direction $e_i$,
\begin{equation}
    E_{e_i}^+-E_{e_i}
    =
    a_i^2-(1-\gamma_{\rm in}^2)\sigma_i^2,
    \label{eq:coordinate_energy_change}
\end{equation}
so that this pre-update direction is reinforced when
\begin{equation}
    |a_i|
    >
    \sqrt{1-\gamma_{\rm in}^2}\,\sigma_i,
    \label{eq:coordinate_reinforcement_threshold}
\end{equation}
and suppressed when $|a_i|$ falls below this threshold.

This analysis also shows that in-span spectral preconditioning does not always appear as suppression of the weakest mode. If the ROM trajectory has enough projected energy in a direction to exceed the threshold, that direction is reinforced even under forgetting. If the projected energy falls below the threshold, the direction decays. The spiral example with aggressive forgetting realizes the latter case for one nearly inactive direction, whereas the PDE examples show richer regimes with less forgetting in which several retained directions are reinforced while the basis still rotates and reweights.

\subsection{Full-order operator query} \label{sec:methods_query}

The out-of-span channel requires occasional information that is not constrained to the current reduced subspace. In this work, that information is supplied by a full-order operator query. The idea is to let the ROM advance the system most of the time, and to call the full-order model only at selected correction events.

Suppose an out-of-span event is triggered at time step $n+1$. Before the correction, the ROM has produced the state
\begin{equation}
    q^n_{\mathrm{ROM}}
    \coloneqq
    q_{\mathrm{ref}}+D^{-1}\Phi a^n .
    \label{eq:rom_state_before_correction}
\end{equation}
We then initialize one full-order step from this ROM state and solve
\begin{equation}
    R^{n+1}
    \left(
    q^{n+1}_{\mathrm{corr}};
    q^n_{\mathrm{ROM}}
    \right)
    =
    0 .
    \label{eq:rom_initialized_correction}
\end{equation}
The resulting state $q^{n+1}_{\mathrm{corr}}$ is used as the external correction snapshot. In scaled coordinates,
\begin{equation}
    y_{\mathrm{corr}}
    \coloneqq
    D\left(q^{n+1}_{\mathrm{corr}}-q_{\mathrm{ref}}\right).
    \label{eq:scaled_correction_snapshot}
\end{equation}
This snapshot is then passed to the out-of-span iSVD update with forgetting factor $\gamma_{\rm out}$.

This correction is different from a coarse full-order lookahead trajectory previously used in~\cite{hedayat2026history}. A coarse-FOM oracle advances an auxiliary full-order state on a larger time step and uses that state as the correction signal. The full-order operator query used here instead takes a single fine-grid full-order step from the current ROM state. We use it because it gives a simple hybrid model, where the reduced model carries the trajectory most of the time and occasionally interacts with the full-order operator to receive a local out-of-span correction snapshot.

For the spiral example, Eq.~\eqref{eq:rom_initialized_correction} reduces to one exact full-order map applied to the ROM state,
\begin{equation}
    q_{\mathrm{corr}}
    \coloneqq
    G q^n_{\mathrm{ROM}}+c .
    \label{eq:spiral_rom_initialized_correction_methods}
\end{equation}
For the PDE benchmarks, Eq.~\eqref{eq:rom_initialized_correction} is solved with the same backward-Euler residual used by the FOM, but only at the correction times.

\subsection{SPIN algorithm} \label{sec:methods_alg}

The complete adaptive procedure (\cref{alg:inspan_outspan_rom}) alternates between three operations. First, the ROM advances one time step using the current basis and sample set. Second, if the step is not an external correction time, the ROM-predicted state is used for an in-span update with forgetting factor $\gamma_{\rm in}$. Third, every $z_s$ steps, a full-order operator query is performed, and the produced snapshot is used for an out-of-span update with forgetting factor $\gamma_{\rm out}$.

After any basis update, basis-dependent quantities must be refreshed. For the LSPG--QDEIM ROMs, this includes the sampled basis and its pseudoinverse. After an out-of-span update, the QDEIM sample indices are recomputed. After an in-span update, the subspace is unchanged but the basis vectors have rotated inside it, so the sampled operators depending on the basis are refreshed. The reduced coordinates are also transferred to the updated basis by projecting the current reconstructed ROM state onto the new basis.

\begin{algorithm}[!t]
\caption{SPIN ROM algorithm}
\label{alg:inspan_outspan_rom}
\begin{algorithmic}[1]
\Require Initial basis $\Phi$, singular values $\sigma$, reference state $q_{\mathrm{ref}}$, scaling matrix $D$, sample matrix $S$, reduced state $a^0$, update interval $z_s$, forgetting factors $\gamma_{\rm in}$ and $\gamma_{\rm out}$, final step $N_t$
\For{$n=0,1,\ldots,N_t-1$}
    \State Advance the sampled ROM one step:
    \[
        a^{n+1}\leftarrow \mathrm{ROM\_Step}(\Phi,S,a^n)
    \]
    \State Reconstruct the ROM state:
    \[
        q^{n+1}_{\mathrm{ROM}}
        \leftarrow
        q_{\mathrm{ref}}+D^{-1}\Phi a^{n+1}
    \]
    \If{$(n+1)\bmod z_s=0$}
        \Comment{\textcolor{outspan}{\textbf{out-of-span adaptation}}}
        \State Perform one full-order operator query:
        \[
            R^{n+1}
            \left(q^{n+1}_{\mathrm{corr}};q^n_{\mathrm{ROM}}\right)=0
        \]
        \State Preprocess the correction:
        \[
            y \leftarrow D(q^{n+1}_{\mathrm{corr}}-q_{\mathrm{ref}})
        \]
        \State Update the basis and singular values by out-of-span iSVD:
        \[
            (\Phi,\sigma)\leftarrow
            \mathrm{iSVD}(\Phi,\sigma,y,\gamma_{\rm out})
        \]
        \State Recompute QDEIM sample matrix $S$ and sampled operators.
    \Else
        \Comment{\textcolor{inspan}{\textbf{in-span adaptation}}}
        \State Preprocess the ROM prediction:
        \[
            y \leftarrow D(q^{n+1}_{\mathrm{ROM}}-q_{\mathrm{ref}})
        \]
        \State Update the basis and singular values by in-span iSVD:
        \[
            (\Phi,\sigma)\leftarrow
            \mathrm{iSVD}(\Phi,\sigma,y,\gamma_{\rm in})
        \]
        \State Refresh sampled operators.
    \EndIf
    \State Transfer the current state to the updated basis:
    \[
        a^{n+1}\leftarrow
        \Phi^\top D(q^{n+1}_{\mathrm{ROM}}-q_{\mathrm{ref}})
    \]
\EndFor
\end{algorithmic}
\end{algorithm}

The baseline adaptive ROM used for comparison can be obtained from \cref{alg:inspan_outspan_rom} by omitting the in-span update in the \texttt{else} branch. The static ROM is obtained by omitting both in-span and out-of-span updates. The implementation used in this work is available at \url{https://github.com/APHedayat/SPIN}.

\subsection{Metrics and diagnostics} \label{sec:methods_diagnostics}

We use two groups of diagnostics. The first group measures how a single out-of-span correction is absorbed by a fixed-rank update. Let $\Phi_{\mathrm{pre}}$ and $\Phi_{\mathrm{post}}$ denote the orthonormal bases immediately before and after the correction update, and let $q_{\mathrm{corr}}$ be the correction snapshot. The out-of-span residual relative to the pre-update subspace is
\begin{equation}
    r
    \coloneqq
    q_{\mathrm{corr}}
    -
    \Phi_{\mathrm{pre}}\Phi_{\mathrm{pre}}^\top q_{\mathrm{corr}},
    \qquad
    \hat r
    \coloneqq
    r/\|r\|_2 .
    \label{eq:diagnostic_correction_residual}
\end{equation}
The residual capture is
\begin{equation}
    \chi_{\mathrm{res}}
    \coloneqq
    \|\Phi_{\mathrm{post}}^\top \hat r\|_2 ,
    \label{eq:diagnostic_residual_capture}
\end{equation}
which measures how much of the new residual direction lies in the post-update subspace. The plane-change angle is the largest principal angle between the pre- and post-update subspaces,
\begin{equation}
    \theta_{\mathrm{plane}}
    \coloneqq
    \max_i \arccos\!\left(
    \zeta_i(\Phi_{\mathrm{pre}}^\top\Phi_{\mathrm{post}})
    \right),
    \label{eq:diagnostic_plane_angle}
\end{equation}
where $\zeta_i(\cdot)$ are singular values. The correction error is
\begin{equation}
    e_{\mathrm{corr}}
    \coloneqq
    \left\|
    q_{\mathrm{corr}}
    -
    \Phi_{\mathrm{post}}\Phi_{\mathrm{post}}^\top q_{\mathrm{corr}}
    \right\|_2 .
    \label{eq:diagnostic_correction_error}
\end{equation}
This measures the part of the correction snapshot not represented by the updated subspace.

The second group of diagnostics measures how the adaptive basis, spectrum, and subspace evolve over the online trajectory. The first trajectory diagnostic is the singular-value history,
\begin{equation}
    \sigma_1^n,\ldots,\sigma_r^n .
    \label{eq:diagnostic_singular_values}
\end{equation}
These values measure how the adaptive algorithm weights the retained modes. In the baseline adaptive ROM, they change only at correction events. In the SPIN ROM, they may change at every time step.

The second trajectory diagnostic measures basis reorientation between consecutive steps,
\begin{equation}
    d_{\mathrm{basis}}^n
    \coloneqq
    \left\|
    I-\left(\Phi^{n+1}\right)^\top\Phi^n
    \right\|_F .
    \label{eq:diagnostic_basis_rotation_methods}
\end{equation}
This quantity is sensitive to changes in the orthonormal frame. It can be nonzero even when the two bases span the same subspace.

The third trajectory diagnostic measures genuine subspace change. Let the singular values of
\begin{equation}
    \left(\Phi^n\right)^\top\Phi^{n+1}
\end{equation}
be $s_1,\ldots,s_r$, with $s_i=\cos\theta_i$, where $\theta_i$ are the principal angles between the two subspaces. We report
\begin{equation}
    d_{\mathrm{subspace}}^n
    \coloneqq
    \max_i \sin\theta_i
    =
    \sqrt{1-\min_i s_i^2}.
    \label{eq:diagnostic_principal_angle_methods}
\end{equation}
This diagnostic is zero when the subspace is unchanged, even if the basis has rotated inside that subspace. The combination of nonzero $d_{\mathrm{basis}}^n$ and zero $d_{\mathrm{subspace}}^n$ is therefore the numerical signature of a pure in-span update.

\subsection{Computational complexity} \label{sec:methods_complexity}

The additional cost of in-span learning is small compared with the cost of obtaining out-of-span information from the full-order operator. Let $N$ be the full-order dimension, $r$ the ROM rank, and $m$ the number of sampled residual entries, with $r,m\ll N$.

We first define the cost of one hyper-reduced LSPG time step. A sampled LSPG step solves a nonlinear least-squares problem in $r$ unknowns using only $m$ sampled residual entries. Let $\mathcal{C}_{\mathrm{eval}}^{(m)}$ denote the cost of evaluating the sampled residual and sampled Jacobian contributions required by one Newton iteration. Once these sampled quantities are available, the dense reduced linear algebra scales as
\begin{equation}
    \mathcal{C}_{\mathrm{LSPG}}
    \coloneqq
    \mathcal{C}_{\mathrm{eval}}^{(m)}
    +
    \mathcal{O}(mr^2+r^3)
    \label{eq:sampled_lspg_cost}
\end{equation}
per Newton iteration. Thus, between correction events, the online simulation remains reduced and sampled.

The cost of an iSVD basis update is roughly similar for in-span and out-of-span snapshots. For a general out-of-span snapshot, the update projects the snapshot onto the current basis, forms the orthogonal residual, computes the SVD of a small $(r+1)\times(r+1)$ core matrix, and rotates the ambient basis. The dominant scaling is
\begin{equation}
    \mathcal{C}_{\mathrm{iSVD}}^{\mathrm{out}}
    \coloneqq
    \mathcal{O}(Nr^2+r^3).
    \label{eq:isvd_out_cost}
\end{equation}
For an in-span snapshot, the orthogonal residual is zero and the update can be performed through the reduced covariance in Eq.~\eqref{eq:inspan_reduced_covariance}. This avoids the residual construction, but the small eigendecomposition and the ambient basis rotation remain. Therefore, the in-span update costs as
\begin{equation}
    \mathcal{C}_{\mathrm{iSVD}}^{\mathrm{in}}
    \coloneqq
    \mathcal{O}(Nr^2+r^3).
    \label{eq:isvd_in_cost}
\end{equation}
The in-span update can be cheaper in implementations that delay the ambient rotation and accumulate the transformation in reduced coordinates, but in the implementation considered for PDE cases the leading order is the same as for an out-of-span iSVD update.

Basis changes may also require refreshing hyper-reduction quantities. We denote this cost by $\mathcal{C}_{\mathrm{refresh}}$. If QDEIM sampling points are recomputed from a pivoted QR factorization of $\Phi^\top$, then
\begin{equation}
    \mathcal{C}_{\mathrm{QDEIM}}
    \coloneqq
    \mathcal{O}(Nr^2).
    \label{eq:qdeim_cost}
\end{equation}
In-span updates preserve the trial subspace, so one may keep the same sampling set between out-of-span corrections and update only the sampled basis-dependent quantities. If the sampling points are recomputed after every basis rotation, then the same $\mathcal{C}_{\mathrm{QDEIM}}$ cost applies.

The bottleneck in the adaptive procedure is the full-order operator query used to obtain an out-of-span correction. For implicit PDE solvers, this query requires nonlinear iterations, residual and Jacobian evaluations, and large linear solves. We write its cost as
\begin{equation}
    \mathcal{C}_{\mathrm{FOM}}
    \coloneqq
    \mathcal{O}(N^\beta),
    \qquad \beta>1,
    \label{eq:fom_query_cost}
\end{equation}
where $\beta$ depends on the discretization, nonlinear solver, linear solver, preconditioner, and sparsity structure. This term is typically much larger than the reduced dense linear algebra when $N$ is large and $r,m\ll N$.

With these ingredients, the cost of one in-span update is
\begin{equation}
    \mathcal{C}_{\mathrm{in}}
    \coloneqq
    \mathcal{C}_{\mathrm{iSVD}}^{\mathrm{in}}
    +
    \mathcal{C}_{\mathrm{refresh}}^{\mathrm{in}},
    \label{eq:inspan_event_cost}
\end{equation}
where $C_{\mathrm{refresh}}^{\mathrm{in}}$ denotes any update of sampled basis-dependent quantities performed after the in-span rotation. The cost of one out-of-span correction event is
\begin{equation}
    \mathcal{C}_{\mathrm{out}}
    \coloneqq
    \mathcal{C}_{\mathrm{FOM}}
    +
    \mathcal{C}_{\mathrm{iSVD}}^{\mathrm{out}}
    +
    \mathcal{C}_{\mathrm{QDEIM}}
    +
    \mathcal{C}_{\mathrm{refresh}}^{\mathrm{out}},
    \label{eq:outspan_event_cost}
\end{equation}
where $\mathcal{C}_{\mathrm{refresh}}^{\mathrm{out}}$ denotes the remaining basis-dependent reduced-operator or sampled-operator updates.

The key comparison is that the SPIN ROM uses the same out-of-span correction schedule as the baseline adaptive ROM. It therefore does not add any $\mathcal{C}_{\mathrm{FOM}}$ terms. If the correction interval is $z_s$, then between two out-of-span corrections the SPIN model adds
\begin{equation}
    (z_s-1)\mathcal{C}_{\mathrm{in}}
\end{equation}
relative to the baseline adaptive model, but it does not change the number of full-order operator queries. In the large-scale implicit PDE regime, $\mathcal{C}_{\mathrm{FOM}}=\mathcal{O}(N^\beta)$ is the dominant cost, whereas the in-span operations are reduced or low-rank dense linear algebra.

\section*{Data availability}
The data supporting the findings of this work are available from the corresponding author upon request.

\section*{Code availability}
The code used to generate the numerical results in this work is available at \url{https://github.com/APHedayat/SPIN}.

\section*{Acknowledgements}

\textbf{A.H.} and \textbf{K.D.} were supported by the Office of Under Secretary of Defense for Research and Engineering (OUSD(RE)) grant N00014-21-1-295. \textbf{L.B.} acknowledges NSF award CCF-2331590 and ARO award W911NF-26-1-A219.

\section*{Author contributions}

\textbf{A.H.:} Methodology, Software, Investigation, Data Curation, Writing - Original Draft, Visualization.
\textbf{L.B.:} Conceptualization, Methodology, Writing - Review \& Editing, Supervision.
\textbf{K.D.:} Conceptualization, Methodology, Investigation, Writing - Review \& Editing, Supervision, Funding acquisition.

\section*{Competing interests}

The authors declare no competing interests.


\appendix
%
\clearpage

\renewcommand{\thefigure}{\thesection\arabic{figure}}
\renewcommand{\thetable}{\thesection\arabic{table}}
\renewcommand{\theequation}{\thesection.\arabic{equation}}
\makeatletter
\@addtoreset{figure}{section}
\@addtoreset{table}{section}
\@addtoreset{equation}{section}
\makeatother

\section*{Appendices}
\addcontentsline{toc}{section}{Appendices}
\noindent
The following appendices contain the Supplementary Information accompanying the manuscript. They provide notation, full numerical details for the spiral example, comparisons with alternative fixed-rank updates, hyperparameter sensitivity studies for the nonlinear PDE benchmarks, and reproducibility information supporting the main text.

\section{Notation}
\label{app:notation}

\Cref{tab:supp_notation} summarizes the main symbols used in the paper.

\begin{table}[!ht]
\centering
\caption{\textbf{Notation used in the main text and appendices.}}
\label{tab:supp_notation}
\begin{tabularx}{\linewidth}{lX}
\toprule
Symbol & Meaning \\
\midrule
$N$ & Full-order dimension \\
$r$ & ROM rank \\
$m$ & Number of hyper-reduction sample points \\
$N_t$ & Number of online prediction time steps \\
$q^n$ & Full-order state at time step $n$ \\
$q^n_{\mathrm{ROM}}$ & ROM-reconstructed state at time step $n$ \\
$q_{\mathrm{corr}}$ & Correction snapshot obtained from a full-order operator query \\
$q_{\mathrm{ref}}$ & Reference state used in the affine ROM representation \\
$D$ & Diagonal scaling matrix \\
$\Phi\in\mathbb{R}^{N\times r}$ & Orthonormal reduced basis \\
$\Phi_0$ & Initial reduced basis \\
$\Phi_{\mathrm{pre}}$, $\Phi_{\mathrm{post}}$ & Reduced bases immediately before and after a correction update \\
$\Sigma=\mathrm{diag}(\sigma_1,\ldots,\sigma_r)$ & Retained singular values associated with $\Phi$ \\
$a^n\in\mathbb{R}^r$ & Reduced coordinate vector at time step $n$ \\
$S$ & Hyper-reduction sampling matrix \\
$z_s$ & Number of ROM time steps between out-of-span corrections \\
$\gamma_{\rm in}$ & Forgetting factor for in-span updates \\
$\gamma_{\rm out}$ & Forgetting factor for out-of-span updates \\
$y$ & Preprocessed snapshot passed to iSVD \\
$w=\Phi^\top y$ & In-plane coefficients of $y$ \\
$r_y=y-\Phi w$ & Orthogonal residual of $y$ with respect to $\mathrm{span}(\Phi)$ \\
$\rho=\|r_y\|_2$ & Residual norm \\
$K$ & Small iSVD core matrix \\
$Y_k$ & Exponentially weighted snapshot-history matrix after $k$ updates \\
$M_k=Y_kY_k^\top$ & Exponentially weighted second-moment matrix in ambient coordinates \\
$C$ & Reduced covariance represented in reduced coordinates \\
$C_s$ & Reduced covariance after $s$ consecutive in-span updates \\
$C_{\mathrm{corr}}$ & In-plane block of the iSVD core covariance used to diagnose local rank competition \\
$\eta=\rho^2/\lambda_r(C_{\mathrm{corr}})$ & Residual-to-weakest-mode ratio \\
$\chi_{\mathrm{res}}$ & Residual capture by the post-update basis \\
$\theta_{\mathrm{plane}}$ & Largest principal angle between the pre- and post-update subspaces \\
$e_{\mathrm{corr}}$ & Correction error after projection onto the post-update subspace \\
$d_{\mathrm{basis}}^n$ & Basis-reorientation diagnostic between steps $n$ and $n+1$ \\
$d_{\mathrm{subspace}}^n$ & Genuine subspace-motion diagnostic between steps $n$ and $n+1$ \\
$\theta_i$ & Principal angles between two subspaces \\
\bottomrule
\end{tabularx}
\end{table}

\section{Full numerical details for the spiral example}
\label{app:spiral_details}

The spiral trajectory is
\begin{equation}
    q(t)
    =
    \begin{bmatrix}
    \cos t\\
    \sin t\\
    \alpha t
    \end{bmatrix},
    \qquad
    \alpha=0.4.
\end{equation}
The time step is $\Delta t=0.35$ and the initial time is $t_0=4.0$. The exact discrete dynamics are
\begin{equation}
    q^{k+1}=Gq^k+c,
\end{equation}
with
\begin{equation}
    G=
    \begin{bmatrix}
    0.93937271 & -0.34289781 & 0\\
    0.34289781 & 0.93937271 & 0\\
    0 & 0 & 1
    \end{bmatrix},
    \qquad
    c=
    \begin{bmatrix}
    0\\
    0\\
    0.14
    \end{bmatrix}.
\end{equation}

The initial rank-2 basis is obtained from the two snapshots $q^1$ and $q^2$:
\begin{equation}
    \Phi_0
    =
    \begin{bmatrix}
    -0.08645625 &  0.99594990\\
    -0.46958378 & -0.01888989\\
     0.87864463 &  0.08790323
    \end{bmatrix}.
    \label{eq:si_spiral_phi0}
\end{equation}
The associated singular values are
\begin{equation}
    \Sigma_0
    =
    \mathrm{diag}(2.91533718,\;0.25061752).
    \label{eq:si_spiral_sigma0}
\end{equation}

With $\gamma_{\rm in}=0.1$, four in-span updates give the preconditioned basis
\begin{equation}
    \Phi_{\mathrm{in,pre}}
    =
    \begin{bmatrix}
    -0.40529412 & -0.91385314\\
     0.42031170 & -0.21024705\\
    -0.81183419 &  0.34737389
    \end{bmatrix},
\end{equation}
and singular values
\begin{equation}
    \Sigma_{\mathrm{in,pre}}
    =
    \mathrm{diag}(2.47602980,\;0.03093119).
\end{equation}
The subspace is unchanged:
\begin{equation}
    \mathrm{span}(\Phi_{\mathrm{in,pre}})
    =
    \mathrm{span}(\Phi_0).
\end{equation}

\subsection{Full-order operator query}

The out-of-span correction is generated from the ROM state,
\begin{equation}
    q_{\mathrm{corr}}
    =
    G q^5_{\mathrm{ROM}}+c
    =
    \begin{bmatrix}
    1.29594031\\
    -0.62913467\\
    2.13998855
    \end{bmatrix}.
\end{equation}
The residual relative to the initial ROM plane is
\begin{equation}
    r
    =
    q_{\mathrm{corr}}-\Phi_0\Phi_0^\top q_{\mathrm{corr}}
    =
    \begin{bmatrix}
    -0.01029214\\
    0.36809432\\
    0.19571200
    \end{bmatrix},
\end{equation}
with
\begin{equation}
    \rho=\|r\|_2=0.41701624,
    \qquad
    \rho^2=0.17390255 .
\end{equation}

The correction coefficients in the two pre-update coordinate systems are
\begin{equation}
    \Phi_0^\top q_{\mathrm{corr}}
    =
    \begin{bmatrix}
    2.06367875\\
    1.49068781
    \end{bmatrix},
    \qquad
    \Phi_{\mathrm{in,pre}}^\top q_{\mathrm{corr}}
    =
    \begin{bmatrix}
    -2.52698551\\
    -0.30864925
    \end{bmatrix}.
\end{equation}

\subsection{Rank-2 competition}

The competition matrices are
\begin{equation}
    C_{\mathrm{corr,\,out}}
    =
    \Sigma_0^2
    +
    (\Phi_0^\top q_{\mathrm{corr}})
    (\Phi_0^\top q_{\mathrm{corr}})^\top,
\end{equation}
and
\begin{equation}
    C_{\mathrm{corr,\,in}}
    =
    \Sigma_{\mathrm{in,pre}}^2
    +
    (\Phi_{\mathrm{in,pre}}^\top q_{\mathrm{corr}})
    (\Phi_{\mathrm{in,pre}}^\top q_{\mathrm{corr}})^\top .
\end{equation}
Their weakest eigenvalues are
\begin{equation}
    \lambda_{\min}(C_{\mathrm{corr,\,out}})=1.44819350,
    \qquad
    \lambda_{\min}(C_{\mathrm{corr,\,in}})=0.04743386.
\end{equation}
Thus
\begin{equation}
    \eta_{\mathrm{out}}
    =
    \frac{\rho^2}{\lambda_{\min}(C_{\mathrm{corr,\,out}})}
    \approx
    0.12,
    \qquad
    \eta_{\mathrm{in}}
    =
    \frac{\rho^2}{\lambda_{\min}(C_{\mathrm{corr,\,in}})}
    \approx
    3.67 .
\end{equation}

\subsection{Post-update bases}

Applying the same truncated iSVD update (with $\gamma_{\rm out}=1.0$ for simplicity) to both pre-update states gives
\begin{equation}
    \Phi_{\mathrm{out,post}}
    =
    \begin{bmatrix}
    0.17789643 & -0.94264402\\
    -0.39173043 & -0.33113817\\
    0.90271819 & 0.04206862
    \end{bmatrix},
\end{equation}
and
\begin{equation}
    \Phi_{\mathrm{in,post}}
    =
    \begin{bmatrix}
    -0.45818482 & 0.48087061\\
    0.33012848 & 0.87292840\\
    -0.82527684 & 0.08221594
    \end{bmatrix}.
\end{equation}

We define the residual capture as $\|\Phi_{\mathrm{post}}^\top \hat r\|_2$, where $\hat r \coloneqq r/\|r\|_2$, the plane angle as the largest principal angle $\max_i\arccos(\zeta_i(\Phi_{\mathrm{pre}}^\top\Phi_{\mathrm{post}}))$, where $\zeta_i(\cdot)$ are singular values, the retained alignment as $\max_j |(\phi^{\mathrm{post}}_j)^\top \hat r|$, where $\phi^{\mathrm{post}}_j$ is the $j$th column of $\Phi_{\mathrm{post}}$, and the correction error as $\|q_{\mathrm{corr}}-\Phi_{\mathrm{post}}\Phi_{\mathrm{post}}^\top q_{\mathrm{corr}}\|_2$. The measured outcomes are listed in \cref{tab:supp_spiral_main}.

\begin{table}[!ht]
\centering
\caption{\textbf{Main spiral diagnostics for baseline adaptive and SPIN updates.}}
\label{tab:supp_spiral_main}
\begin{tabular}{lcccc}
\toprule
Method & Residual capture & Plane angle & Retained alignment & Correction error \\
\midrule
Baseline adaptive & 0.260 & $15.06^\circ$ & 0.249 & $1.02\times 10^{-2}$ \\
SPIN & 0.802 & $53.29^\circ$ & 0.797 & $8.84\times 10^{-4}$ \\
\bottomrule
\end{tabular}
\end{table}

\subsection{Core-matrix interpretation}

The same effect can be read directly from the small iSVD core matrix. For a pre-update basis $\Phi_{\mathrm{pre}}$ and singular values $\Sigma_{\mathrm{pre}}$, write
\[
    \alpha=\Phi_{\mathrm{pre}}^\top q_{\mathrm{corr}},
    \qquad
    r=q_{\mathrm{corr}}-\Phi_{\mathrm{pre}}\alpha,
    \qquad
    \rho=\|r\|_2,
    \qquad
    \hat r=r/\rho .
\]
The out-of-span iSVD update forms the augmented basis $[\Phi_{\mathrm{pre}}\;\;\hat r]$ and the core matrix
\begin{equation}
    K
    =
    \begin{bmatrix}
    \gamma_{\rm out}\Sigma_{\mathrm{pre}} & \alpha\\
    0^\top & \rho
    \end{bmatrix}.
\end{equation}
Here we use $\gamma_{\rm out}=1.0$. The left singular vectors of $K$ determine how the retained post-update basis vectors mix the old pre-update modes with the new residual direction. Because the update is truncated back to rank two, the first retained vector corresponds to the dominant slot, while the second retained vector is the weak slot where the old weak mode competes with the incoming residual.

For the baseline adaptive model, $\Phi_{\mathrm{pre}}=\Phi_0$ and $\Sigma_{\mathrm{pre}}=\Sigma_0$, giving
\begin{equation}
    K_{\mathrm{out}}
    =
    \begin{bmatrix}
    2.91533718 & 0 & 2.06367875\\
    0 & 0.25061752 & 1.49068781\\
    0 & 0 & 0.41701624
    \end{bmatrix}.
\end{equation}
The second retained left singular vector is
\begin{equation}
    u^{(2)}_{K,\mathrm{out}}
    =
    \begin{bmatrix}
    0.27395794\\
    -0.92887308\\
    -0.24928267
    \end{bmatrix}.
\end{equation}
The entries correspond, respectively, to the old first mode, the old second mode, and the new residual direction. Thus the weak retained slot is still dominated by the old second mode, with magnitude $0.929$, while only $0.249$ of the slot points along the residual direction.

For the SPIN model, $\Phi_{\mathrm{pre}}=\Phi_{\mathrm{in,pre}}$ and $\Sigma_{\mathrm{pre}}=\Sigma_{\mathrm{in,pre}}$, giving
\begin{equation}
    K_{\mathrm{in}}
    =
    \begin{bmatrix}
    2.47602980 & 0 & -2.52698551\\
    0 & 0.03093119 & -0.30864925\\
    0 & 0 & 0.41701624
    \end{bmatrix}.
\end{equation}
The corresponding retained core vector is
\begin{equation}
    u^{(2)}_{K,\mathrm{in}}
    =
    \begin{bmatrix}
    0.10526228\\
    -0.59441607\\
    0.79723860
    \end{bmatrix}.
\end{equation}
Now the same weak slot is much more strongly aligned with the residual direction: the residual contribution has magnitude $0.797$, while the old second-mode contribution has dropped to $0.594$. The signs are arbitrary because singular vectors are sign-indeterminate; the magnitudes show how the fixed rank-two budget is allocated. In this sense, in-span preconditioning does not add a third basis vector. It repurposes the weak retained slot so that the same out-of-span correction survives the truncation more effectively.

\section{Alternative fixed-rank updates in the spiral example}
\label{app:alternative_updates}

The spiral also allows comparison with several alternative fixed-rank update rules. All methods receive the same correction $q_{\mathrm{corr}}$ and use the same rank budget.

\subsection{Rank-one interpolation}

The rank-one interpolation update of Ref.~\cite{huang2023predictive} takes the form
\begin{equation}
    \Phi^+_{\mathrm{raw}}
    =
    \Phi_0
    +
    \frac{r\alpha^\top}{\alpha^\top\alpha},
    \qquad
    \alpha=\Phi_0^\top q_{\mathrm{corr}}.
\end{equation}
This update is constructed so that
\begin{equation}
    \Phi^+_{\mathrm{raw}}\alpha
    =
    \Phi_0\alpha+r
    =
    q_{\mathrm{corr}} .
\end{equation}
Therefore the correction lies exactly in the updated span. However, exact interpolation of one state does not imply that the updated basis is well aligned with the new residual direction.

\subsection{Greedy residual replacement}

A greedy residual replacement keeps the dominant old mode and replaces the weak mode by the normalized residual:
\begin{equation}
    \phi^+_{1,\mathrm{greedy}}=\phi^0_1,
    \qquad
    \phi^+_{2,\mathrm{greedy}}=\widehat{r}.
\end{equation}
This achieves perfect residual alignment, but discards the in-plane contribution associated with the second old mode. In the spiral, the correction error is therefore
\begin{equation}
    \|q_{\mathrm{corr}}-q^6_{\mathrm{ROM,greedy}}\|_2
    =
    |\alpha_2|
    =
    1.49068781 .
\end{equation}

\subsection{Optimal one-slot rotation}

If the dominant first mode is fixed and only the weak second slot is allowed to rotate in the plane spanned by $\phi_2^0$ and $\widehat{r}$, the exact-correction update is
\begin{equation}
    \phi^+_{2,\mathrm{opt}}
    =
    \frac{\alpha_2\phi_2^0+r}
    {\sqrt{\alpha_2^2+\rho^2}} .
\end{equation}
This update makes the correction exact while keeping $\phi_1^0$ fixed. Its residual alignment is
\begin{equation}
    \frac{\rho}{\sqrt{\alpha_2^2+\rho^2}}
    =
    0.26940447 .
\end{equation}

The comparison clarifies the role of in-span learning. It does not optimize correction interpolation alone, nor residual alignment alone. Instead, it brings a trajectory-preconditioned basis into the same truncated update, which allows the weak modal slot to be repurposed without discarding essential in-plane content. To compare the alternatives, we use the same residual capture, plane angle, retained alignment, and correction-error diagnostics defined above. We also report the slot angle
\[
    \theta_{\mathrm{slot}}
    =
    \arccos\left(
    \left|
    (\phi^{\mathrm{pre}}_2)^\top \phi^{\mathrm{post}}_2
    \right|
    \right),
\]
where $\phi^{\mathrm{pre}}_2$ is the pre-update weak mode and $\phi^{\mathrm{post}}_2$ is the post-update vector assigned to that weak modal slot. The absolute value removes the arbitrary sign of basis vectors. A small slot angle means that the weak slot remains close to its old direction, whereas a large slot angle indicates that the slot has been substantially repurposed. The resulting comparison is given in \cref{tab:supp_alternative_updates} and visualized in \cref{fig:supp_pareto}.

\begin{table}[!ht]
\centering
\caption{\textbf{Comparison of fixed-rank updates on the spiral correction.}}
\label{tab:supp_alternative_updates}
\small
\setlength{\tabcolsep}{4.5pt}
\begin{tabular}{lccccc}
\toprule
Method & Capture & Plane angle & Slot angle & Retained alignment & Correction error \\
\midrule
Baseline adaptive & 0.260 & $15.06^\circ$ & $21.74^\circ$ & 0.249 & $1.02\times 10^{-2}$ \\
SPIN & 0.802 & $53.29^\circ$ & $53.53^\circ$ & 0.797 & $8.84\times 10^{-4}$ \\
Rank-one interpolation & 0.162 & $9.30^\circ$ & $5.43^\circ$ & 0.132 & 0 \\
Greedy replacement & 1.000 & $90.00^\circ$ & $90.00^\circ$ & 1.000 & 1.491 \\
Optimal slot rotation & 0.269 & $15.63^\circ$ & $15.63^\circ$ & 0.269 & 0 \\
\bottomrule
\end{tabular}
\end{table}

\begin{figure}[!t]
\centering
\includegraphics[width=0.78\linewidth]{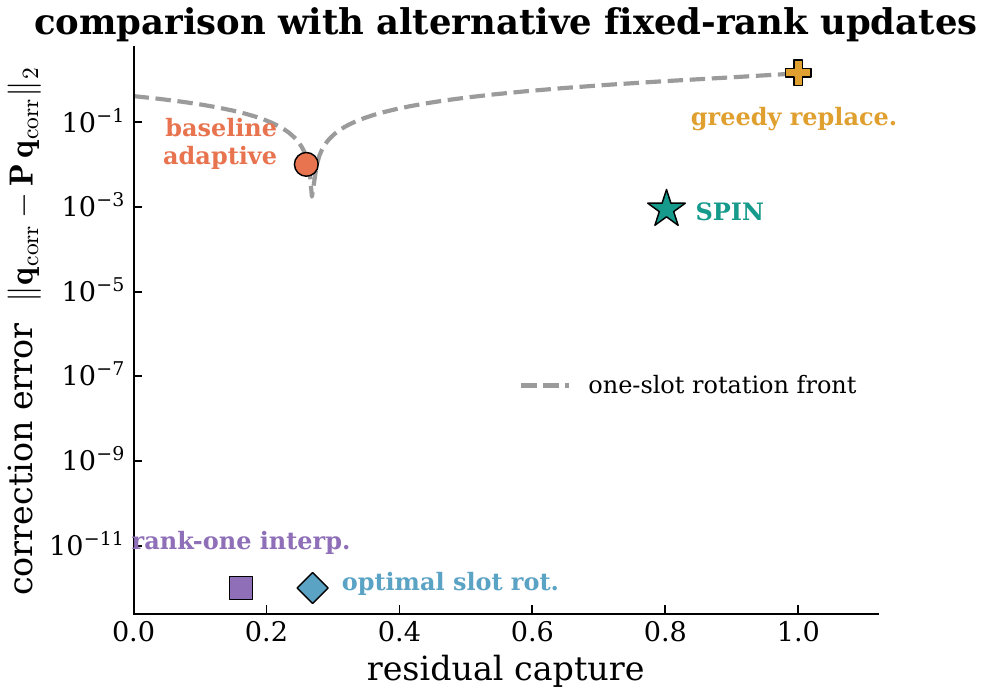}
\caption{\textbf{Alternative fixed-rank updates expose the tradeoff between fitting the correction and retaining the residual.}
Rank-one interpolation and optimal one-slot rotation can fit the correction exactly, but do not strongly align the basis with the new residual direction. Greedy residual replacement captures the residual perfectly, but discards important in-plane content. The SPIN update occupies a different part of the tradeoff because the basis has already been organized by the ROM trajectory before the correction arrives.}
\label{fig:supp_pareto}
\end{figure}

\section{Hyperparameter sensitivity and additional PDE results}
\label{app:hyperparameter_sensitivity}

The SPIN framework is governed by three main parameters: the in-span forgetting factor $\gamma_{\rm in}$, the out-of-span forgetting factor $\gamma_{\rm out}$, and the out-of-span correction interval $z_s$. For each benchmark, we tune the forgetting factors by minimizing the mean relative $L_2$ error over the online trajectory. The baseline adaptive is tuned over its own $\gamma_{\rm out}$ values, and the SPIN model is tuned over the two-dimensional $(\gamma_{\rm in},\gamma_{\rm out})$ grid (\cref{fig:supp_sensitivity,fig:supp_ffout_only}). The main-text comparisons therefore use the best-performing version of each adaptive strategy.

For Burgers, the best baseline adaptive model uses $\gamma_{\rm out}=0.01$, while the best SPIN model uses $(\gamma_{\rm in},\gamma_{\rm out})=(1.0,0.25)$. For Fisher--KPP, the best baseline adaptive model uses $\gamma_{\rm out}=0.1$, while the best SPIN model uses $(\gamma_{\rm in},\gamma_{\rm out})=(0.9,0.25)$. These values are reported in the main text and repeated in \cref{tab:supp_benchmark_parameters}.

The sensitivity maps also clarify the role of in-span forgetting. The best SPIN models occur at large $\gamma_{\rm in}$, meaning that the PDE benchmarks do not benefit from aggressive in-span forgetting. This is consistent with the spectral diagnostics in the main text: unlike the first spiral correction, the PDE trajectories carry enough energy in the retained modes that the singular values are reinforced rather than suppressed. The role of in-span learning in these cases is therefore not to erase a stale mode, but to continuously condition the basis spectrum and orientation using the ROM trajectory.

The maps show a useful practical pattern. When $\gamma_{\rm in}$ is large, the model can become sensitive to the out-of-span forgetting factor $\gamma_{\rm out}$, and poor choices of $\gamma_{\rm out}$ may lead to much larger errors. When $\gamma_{\rm in}$ is smaller, the performance is often more robust across $\gamma_{\rm out}$, although not always optimal. Thus, aggressive in-span memory can produce the best models when paired with a suitable out-of-span forgetting factor, while smaller $\gamma_{\rm in}$ may provide a safer but more conservative operating regime.

The correction interval $z_s$ controls how long the ROM evolves between external corrections. Larger $z_s$ reduces the number of full-order operator queries, but gives the ROM more time to drift; smaller $z_s$ provides more frequent out-of-span information but increases the hybrid cost. \Cref{fig:supp_sensitivity} shows that adding the in-span channel shifts this tradeoff. For a fixed target error, the SPIN ROM can use a substantially larger $z_s$ than the baseline adaptive ROM, meaning that it can evolve for longer on its own before requiring new external information. This supports the central interpretation of in-span learning as trajectory-informed basis preparation: between correction events, the ROM uses its own predictions to keep the basis better organized, so external corrections can be queried less frequently for the same accuracy level. The interval cannot be made arbitrarily large, however, because if the ROM trajectory drifts too far, its in-span predictions no longer provide a reliable guide to the local dynamics.

\begin{figure}[!t]
\centering
\begin{subcaptionblock}{0.32\linewidth}\centering
  \includegraphics[width=\linewidth]{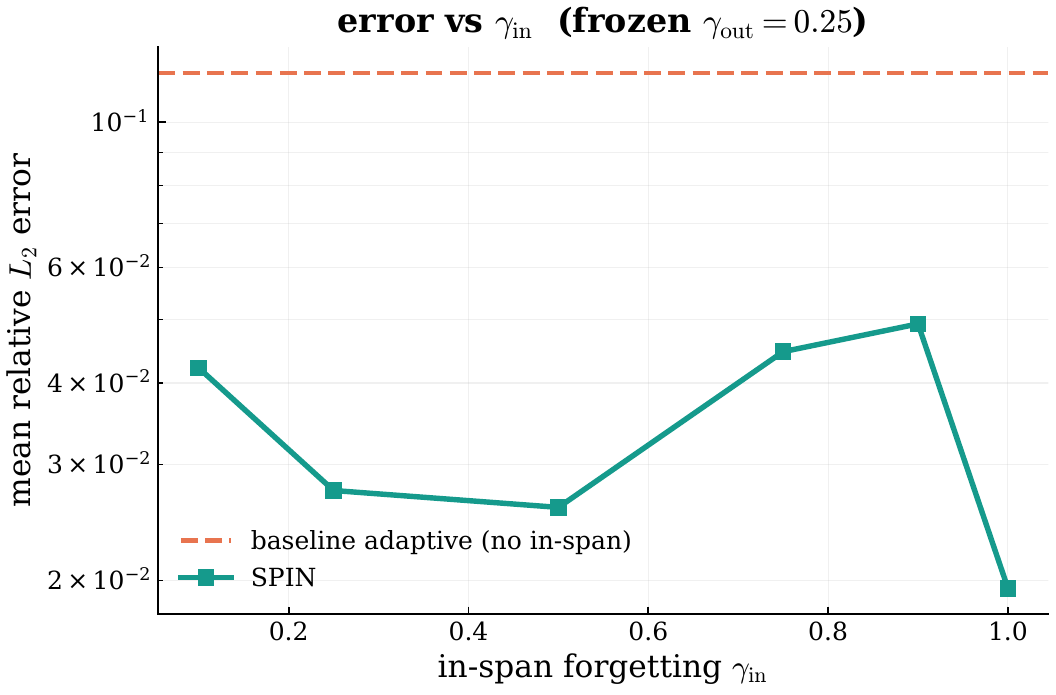}
  \caption{Burgers: $\gamma_{\rm in}$}\end{subcaptionblock}\hfill
\begin{subcaptionblock}{0.32\linewidth}\centering
  \includegraphics[width=\linewidth]{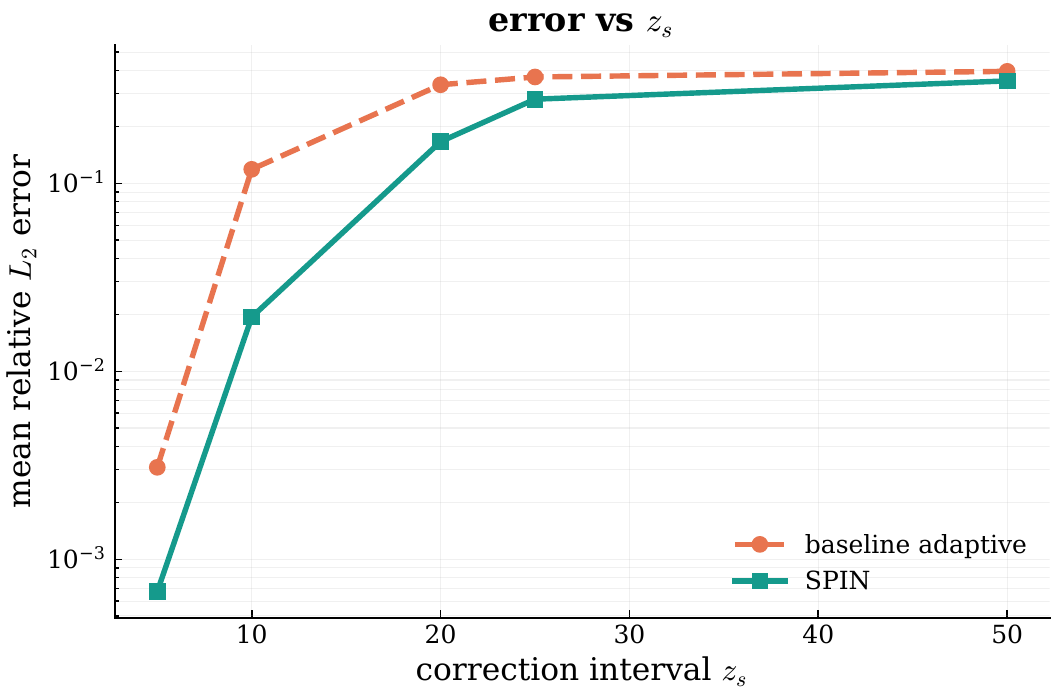}
  \caption{Burgers: $z_s$}\end{subcaptionblock}\hfill
\begin{subcaptionblock}{0.32\linewidth}\centering
  \includegraphics[width=\linewidth]{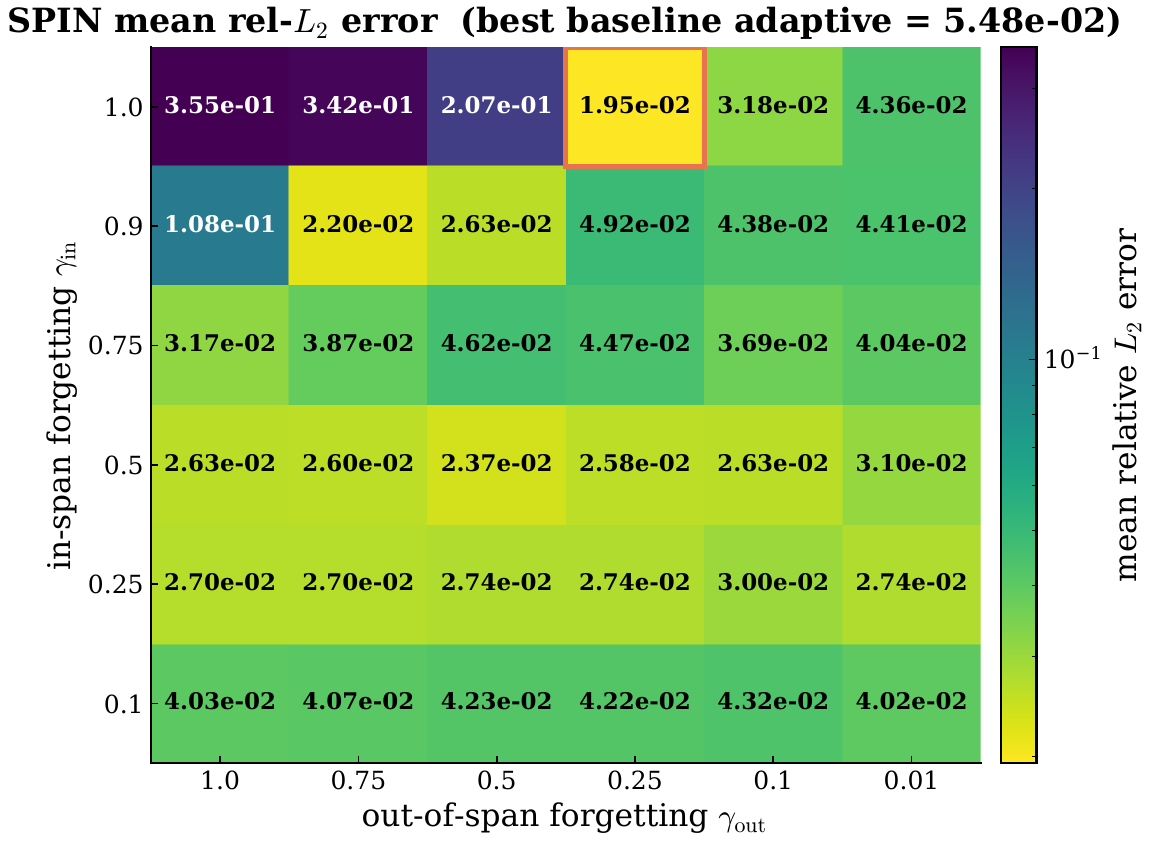}
  \caption{Burgers: $\gamma_{\rm in}\times\gamma_{\rm out}$}\end{subcaptionblock}

\vspace{0.6em}
\begin{subcaptionblock}{0.32\linewidth}\centering
  \includegraphics[width=\linewidth]{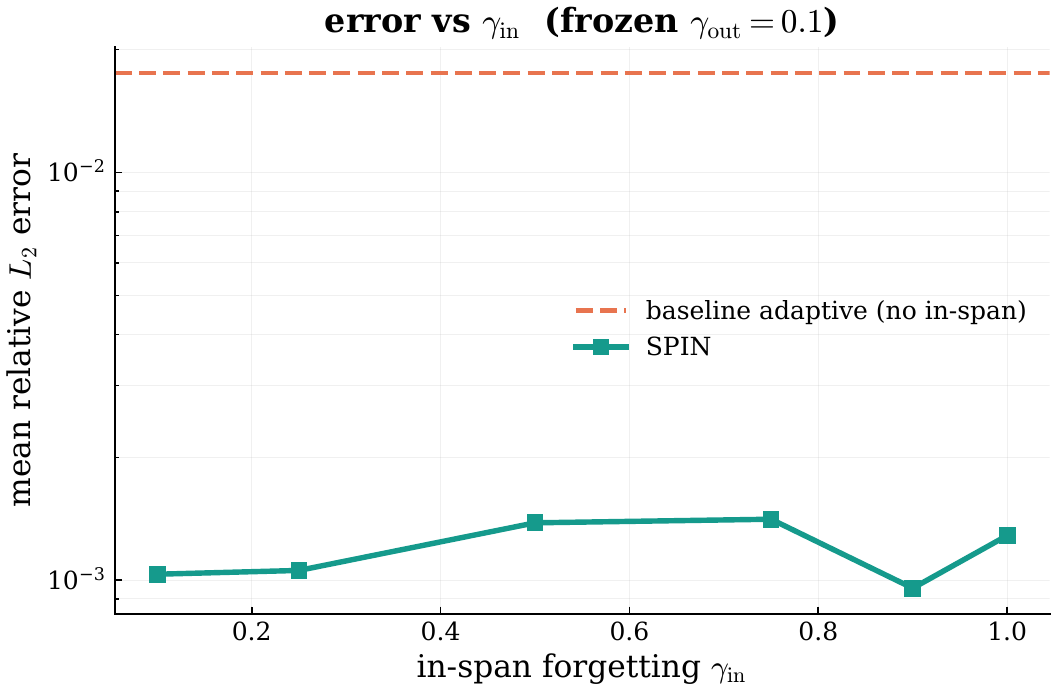}
  \caption{Fisher--KPP: $\gamma_{\rm in}$}\end{subcaptionblock}\hfill
\begin{subcaptionblock}{0.32\linewidth}\centering
  \includegraphics[width=\linewidth]{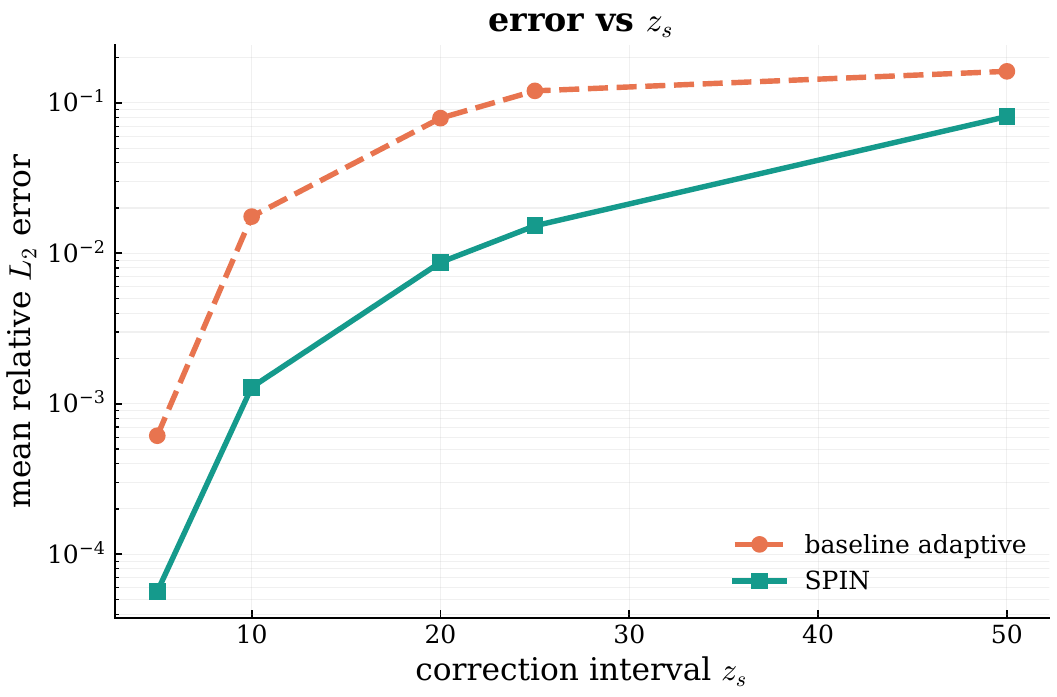}
  \caption{Fisher--KPP: $z_s$}\end{subcaptionblock}\hfill
\begin{subcaptionblock}{0.32\linewidth}\centering
  \includegraphics[width=\linewidth]{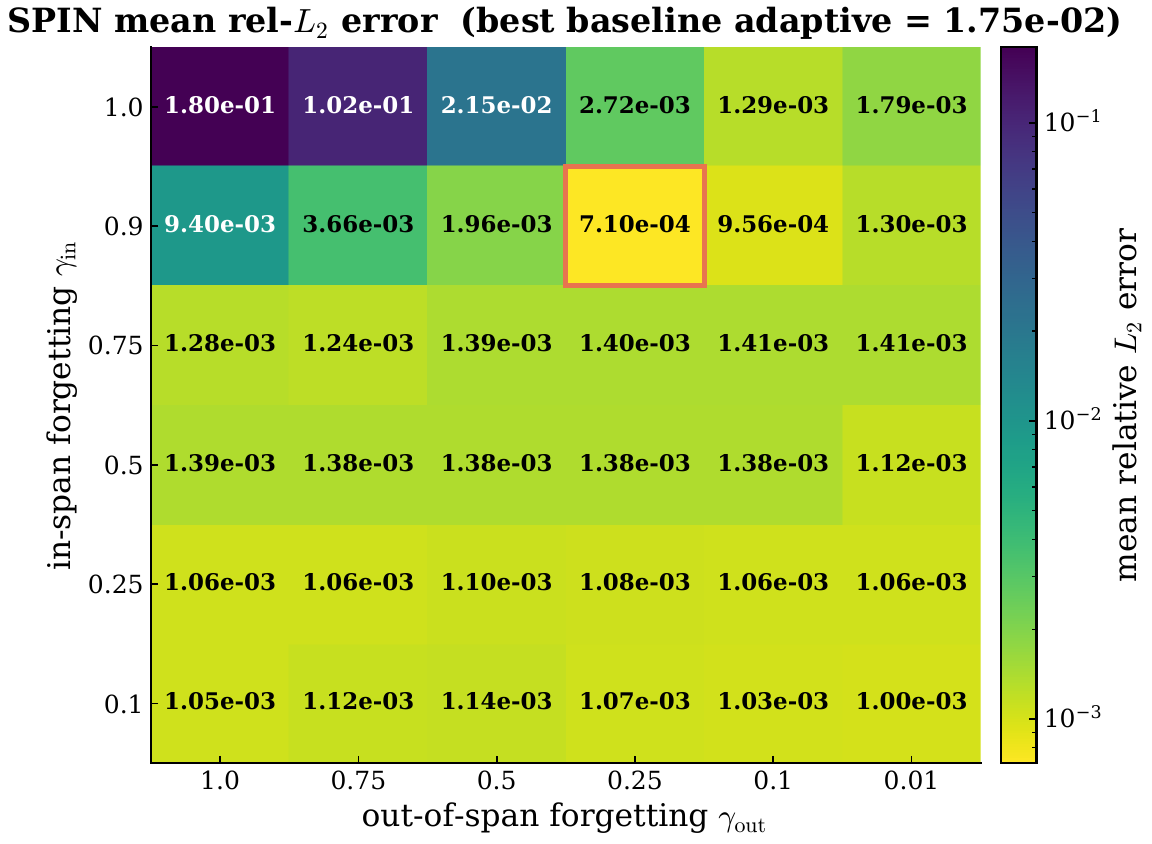}
  \caption{Fisher--KPP: $\gamma_{\rm in}\times\gamma_{\rm out}$}\end{subcaptionblock}
\caption{\textbf{Sensitivity to in-span forgetting and correction interval.}
When $\gamma_{\rm in}$ is close to one, in-span forgetting is weak and the ROM retains a longer memory of its own trajectory; when $\gamma_{\rm in}$ is small, the in-span update emphasizes recent states more aggressively. The most useful regime balances trajectory memory, spectral reweighting and stability between correction events. In the $\gamma_{\rm in}\times\gamma_{\rm out}$ maps (panels c and f), the reference value reported in the title is the best baseline adaptive error obtained by sweeping its out-of-span forgetting factor (\cref{fig:supp_ffout_only}).}
\label{fig:supp_sensitivity}
\end{figure}

\begin{figure}[!t]
\centering
\begin{subcaptionblock}{0.49\linewidth}\centering
  \includegraphics[width=\linewidth]{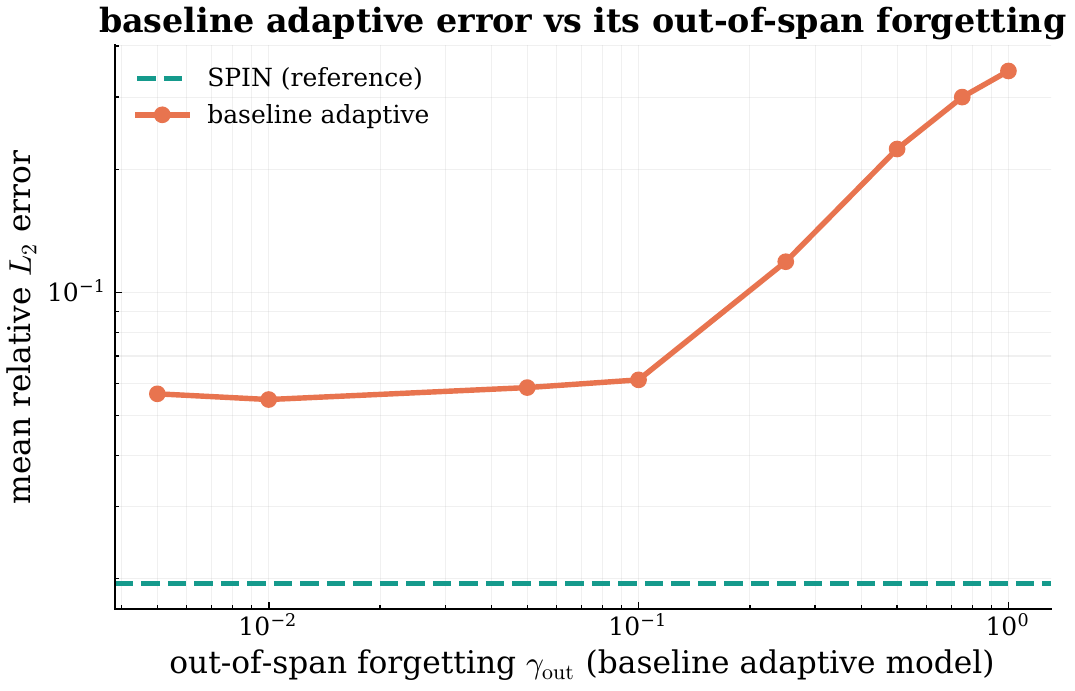}
  \caption{Burgers}\end{subcaptionblock}\hfill
\begin{subcaptionblock}{0.49\linewidth}\centering
  \includegraphics[width=\linewidth]{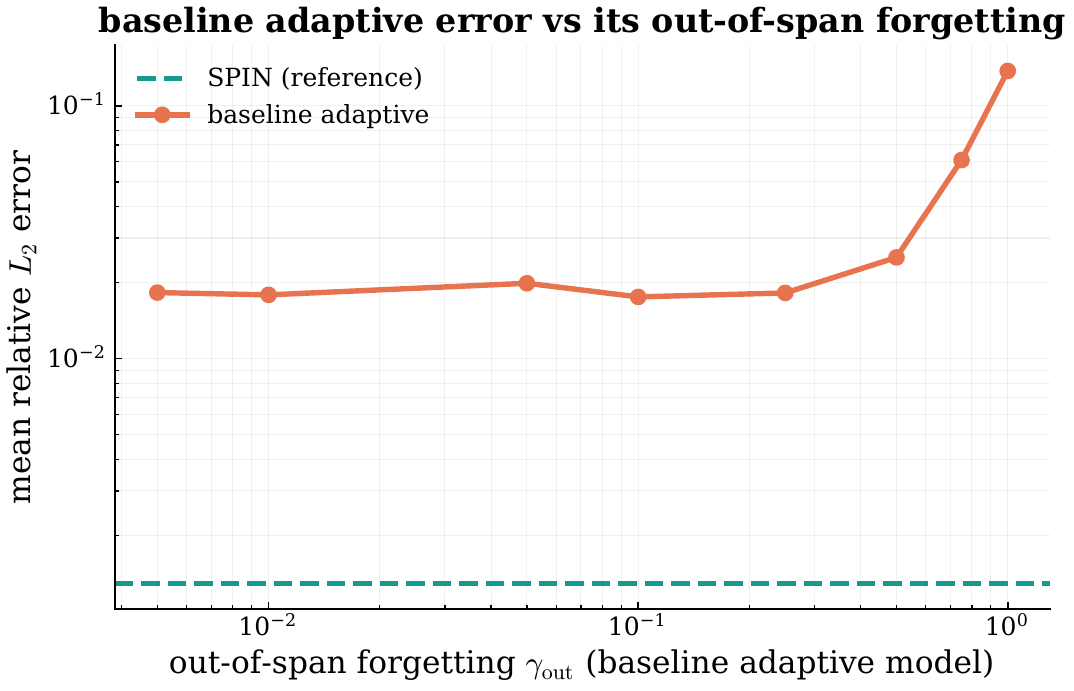}
  \caption{Fisher--KPP}\end{subcaptionblock}
\caption{\textbf{Out-of-span forgetting for the baseline adaptive model.}
The baseline adaptive model's mean relative $L_2$ error as a function of its
out-of-span forgetting factor $\gamma_{\rm out}$ (swept on a logarithmic axis),
with the SPIN error shown as a dashed reference line. The optimal
out-of-span forgetting is problem-dependent---near $\gamma_{\rm out}\approx 0.01$
for Burgers and $\gamma_{\rm out}\approx 0.1$ for Fisher--KPP---but in both cases
the best-tuned baseline adaptive ROM remains substantially less accurate than the
SPIN ROM.}
\label{fig:supp_ffout_only}
\end{figure}

\Cref{fig:supp_additional_pdes} reports additional Burgers and Fisher--KPP diagnostics, including robustness across initial conditions and final-time profiles for representative cases.

\begin{figure}[!t]
\centering
\begin{subcaptionblock}{0.49\linewidth}\centering
  \includegraphics[width=\linewidth]{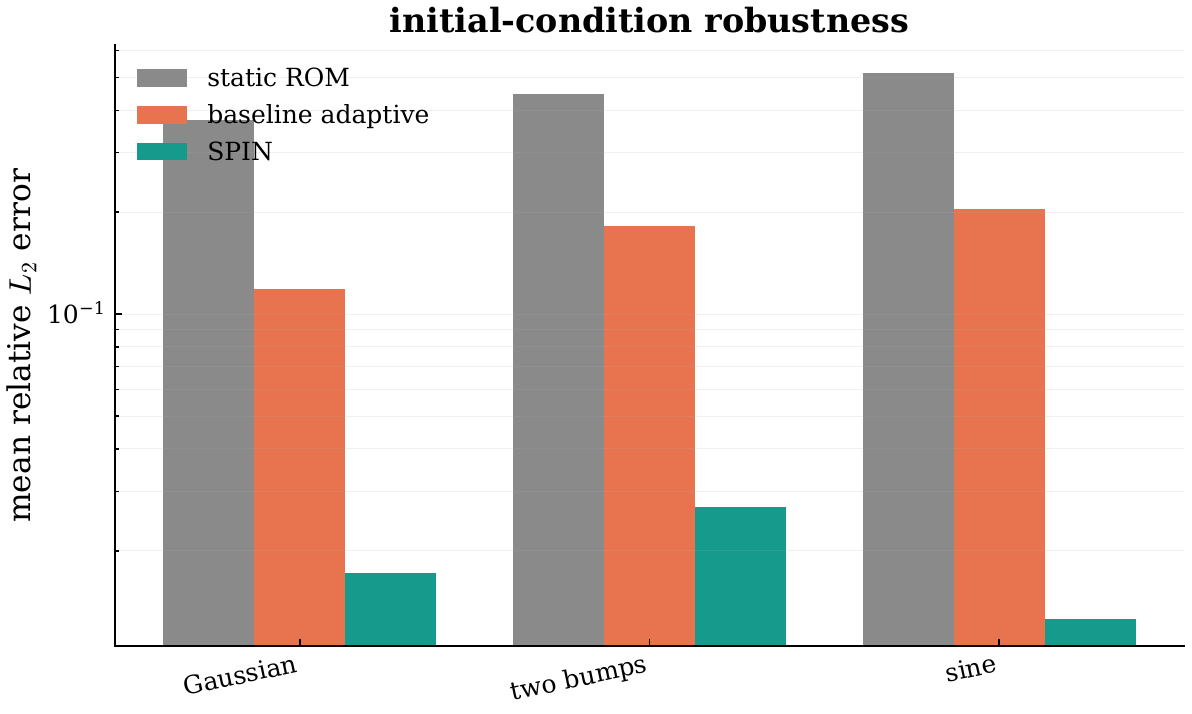}
  \caption{Burgers: initial-condition robustness}\end{subcaptionblock}\hfill
\begin{subcaptionblock}{0.49\linewidth}\centering
  \includegraphics[width=\linewidth]{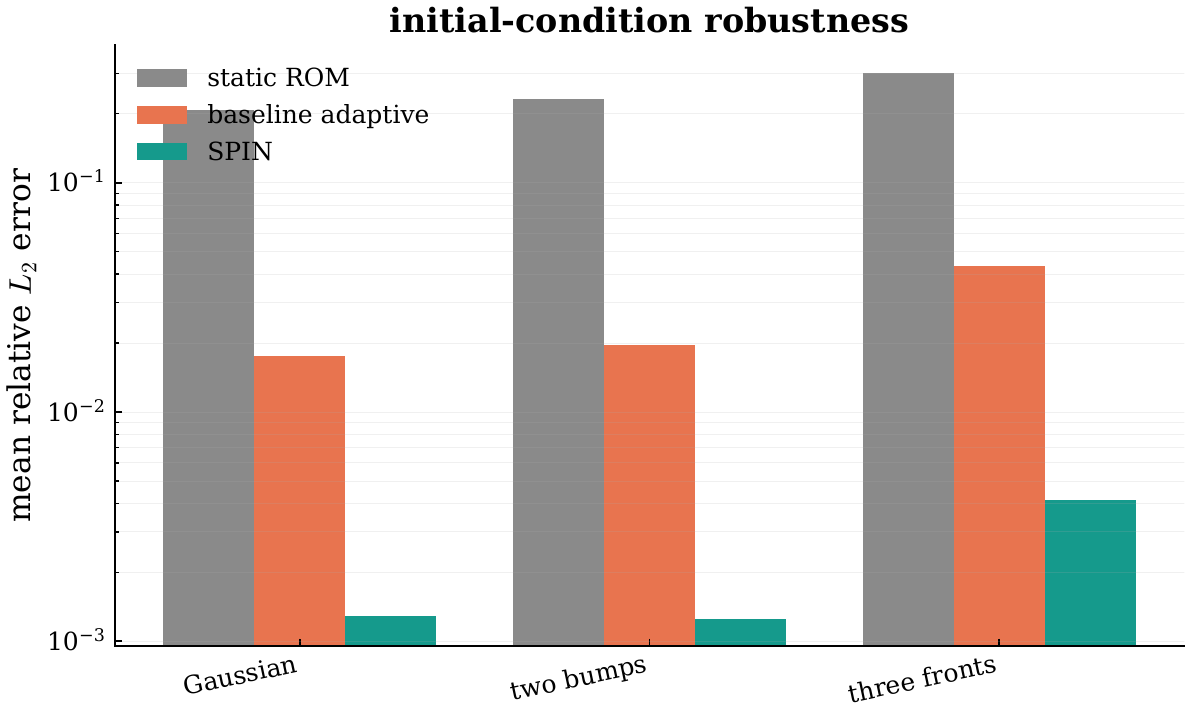}
  \caption{Fisher--KPP: initial-condition robustness}\end{subcaptionblock}

\vspace{0.6em}
\begin{subcaptionblock}{0.99\linewidth}\centering
  \includegraphics[width=\linewidth]{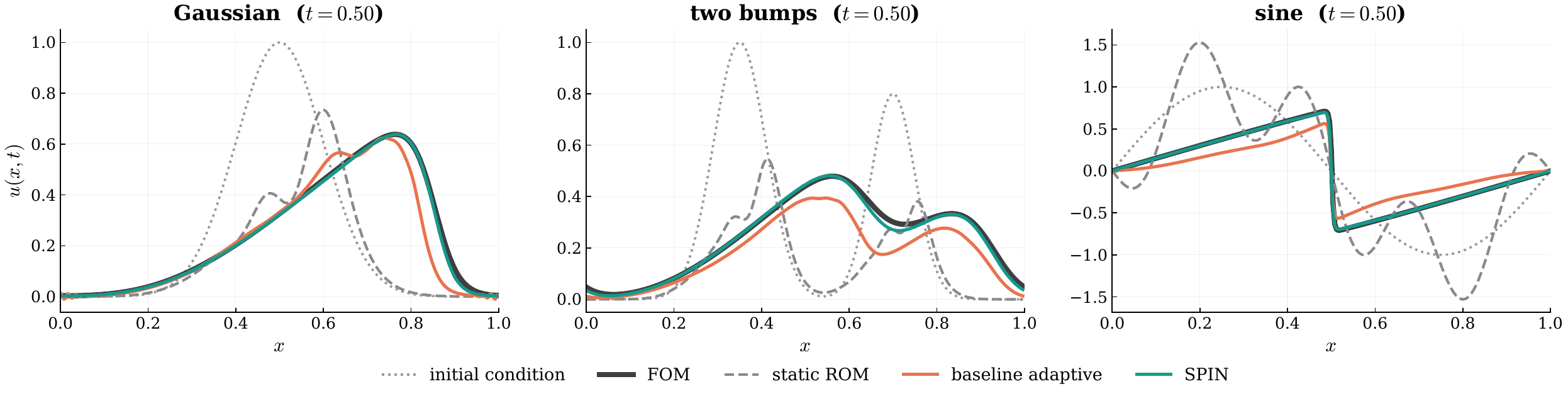}
  \caption{Burgers: final-time profiles across initial conditions}\end{subcaptionblock}

\vspace{0.4em}
\begin{subcaptionblock}{0.99\linewidth}\centering
  \includegraphics[width=\linewidth]{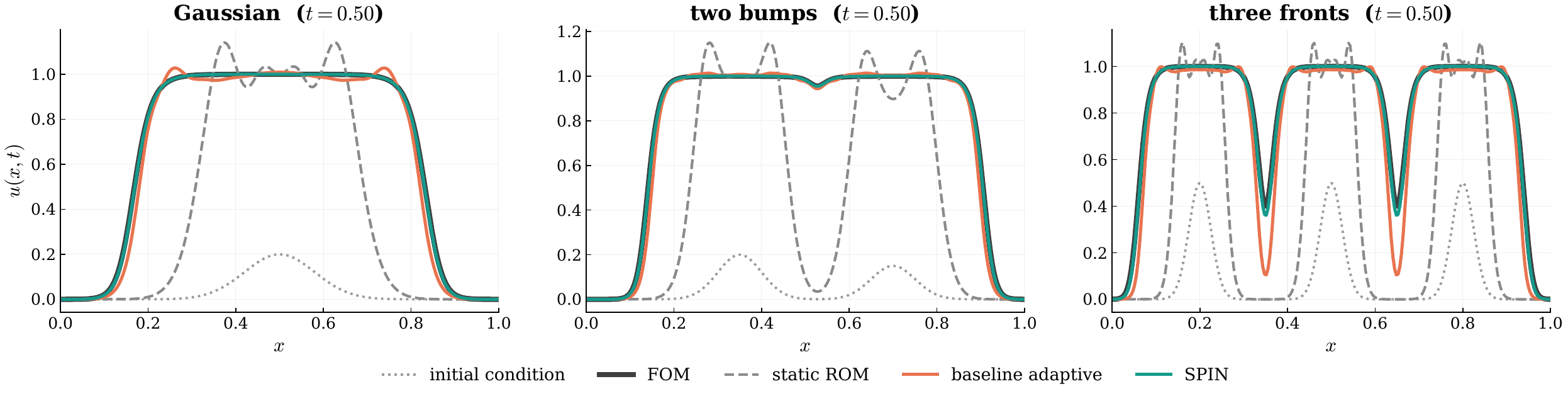}
  \caption{Fisher--KPP: final-time profiles across initial conditions}\end{subcaptionblock}
\caption{\textbf{Additional PDE results support the main-text comparisons.}
Initial-condition robustness tests for Burgers and Fisher--KPP. All ROMs are trained with rank $r=4$ and $m=4$ QDEIM sample points, with offline basis trained on the initial $4$ time steps from $t=0$ to $t=0.004$, while prediction is evaluated over $500$ time steps from $t=0$ to $t=0.50$, with external corrections every $z_s=10$ steps. Across these additional cases, the SPIN ROM uses the same out-of-span correction mechanism as the baseline adaptive ROM, but produces smaller errors and more accurate final-time profiles.}
\label{fig:supp_additional_pdes}
\end{figure}

\section{Reproducibility information}
\label{app:reproducibility}

The main numerical settings used to generate the spiral, Burgers, and Fisher--KPP results are collected in \cref{tab:supp_benchmark_parameters}. Together with the model definitions, update rules, and diagnostics described in the Methods and appendices above, as well as the source code available at \url{https://github.com/APHedayat/SPIN}, these parameters specify the benchmark configurations used for all results reported in the main text and appendices.

\begin{table}[!ht]
\centering
\caption{\textbf{Main benchmark parameters.}}
\label{tab:supp_benchmark_parameters}
\begin{tabular}{llll}
\toprule
Benchmark & Parameter & Value & Notes \\
\midrule
Spiral & $\alpha$ & $0.4$ & Spiral vertical drift \\
 & $\Delta t$ & $0.35$ & Exact discrete map \\
 & $t_0$ & $4.0$ & Initial time \\
 & $r$ & $2$ & Rank-2 ROM \\
 & $\gamma_{\rm in}$ & $0.1$ & In-span forgetting \\
\midrule
Burgers & $N_x$ & $1000$ & Periodic finite-difference grid \\
 & $\nu$ & $10^{-2}$ & Viscosity \\
 & $\Delta t$ & $10^{-3}$ & Backward Euler \\
 & $N_t$ & $500$ & Prediction steps \\
 & $r$ & $4$ & ROM rank \\
 & $m$ & $4$ & QDEIM samples \\
 & $z_s$ & $10$ & Correction interval \\
 & $\gamma_{\rm in},\gamma_{\rm out}$ & $1.0,0.25$ & Forgetting factors (SPIN) \\
 & $\gamma_{\rm out}$ & $0.01$ & Forgetting factor (baseline adaptive) \\
\midrule
Fisher--KPP & $N_x$ & $256$ & Periodic finite-difference grid \\
 & $D$ & $10^{-4}$ & Diffusion coefficient \\
 & $\beta$ & $20$ & Reaction rate \\
 & $\Delta t$ & $10^{-3}$ & Backward Euler \\
 & $N_t$ & $500$ & Prediction steps \\
 & $r$ & $4$ & ROM rank \\
 & $m$ & $4$ & QDEIM samples \\
 & $z_s$ & $25$ & Correction interval \\
 & $\gamma_{\rm in},\gamma_{\rm out}$ & $0.9,0.25$ & Forgetting factors (SPIN) \\
 & $\gamma_{\rm out}$ & $0.1$ & Forgetting factor (baseline adaptive) \\
\bottomrule
\end{tabular}
\end{table}

\end{document}